\documentclass[11pt]{article}

\usepackage{acl}

\usepackage[T1]{fontenc}
\usepackage[utf8]{inputenc}
\usepackage{times}
\usepackage{latexsym}
\usepackage{microtype}
\IfFileExists{inconsolata.sty}{\usepackage{inconsolata}}{}
\usepackage{graphicx}
\usepackage{booktabs}
\usepackage{amsmath,amssymb,amsthm}
\usepackage{multirow}
\usepackage{array}
\usepackage{xcolor}
\usepackage{url}
\usepackage{enumitem}

\newcommand{\ind}{\mathbf{1}}
\newcommand{\E}{\mathbb{E}}
\newcommand{\Prb}{\mathbb{P}}
\newcommand{\M}{\mathcal{M}}
\newcommand{\Z}{\mathcal{Z}}

\newtheorem{theorem}{Theorem}
\newtheorem{proposition}[theorem]{Proposition}

\newtheorem{corollary}[theorem]{Corollary}

\title{Opportunity Is Not Realizability:\\
Selection-Valid Diagnostics for Multi-LLM Routing}

\author{
\textbf{Ibne Farabi Shihab}\textsuperscript{1}\thanks{Corresponding author: \texttt{ishihab@iastate.edu}}
\quad
\textbf{Abu Sa-Adat Mohamed Moon-Im Al Ahsan}\textsuperscript{2}
\\
\textbf{Md Najmus Swaqeeb}\textsuperscript{2}
\\[4pt]
\textsuperscript{1}Department of Computer Science, Iowa State University \\
\textsuperscript{2}Department of Computer Science \& Engineering, BRAC University \\
\texttt{ishihab@iastate.edu},
\texttt{abu.sa.adat.mohamed.moon.im.al.ahsan@g.bracu.ac.bd},\\
\texttt{md.najmus.swaqeeb@g.bracu.ac.bd}}

\date{}

\begin{document}
\maketitle

\begin{abstract}
Oracle routing measures how much a pool of language models could gain from
per-query selection, but the diagnostic has two flaws: testing against a
best fixed model selected on the same examples invalidates paired inference,
and a full-information oracle sees outcomes no deployable router observes.
We separate three estimands (outcome-oracle opportunity, the Bayes-optimal
gain from a declared pre-answer signal, and the held-out gain of a learned
router) and prove selection-valid confidence intervals that survive choosing
the best fixed model or the best member of a router family, a
signal-information sandwich, and a $(1-1/e)$ greedy guarantee for building
compact pools from submodular complementary coverage.  On eight checkpoints
from six families over four benchmarks, selection-valid intervals certify a
population oracle gap of $9.7$--$30.7$ points on every task, yet the
strongest deployable prompt router recovers only $7.5$--$14.4\%$ of it, and
the simultaneous interval for the best of eleven tested policies has lower
limit zero throughout.  The realizable share of oracle opportunity is small
and certifiable: strong routers beat the best fixed model, and most of the
gap remains.
\end{abstract}

\section{Introduction}

Deployments increasingly choose among several language models with different
capabilities, costs, and latencies.  Systems such as FrugalGPT, RouteLLM,
MixLLM, and IRT-Router learn cascades or routing policies that trade quality
against resource use \citep{chen2024frugalgpt,ong2025routellm,
wang2025mixllm,song2025irtrouter}.  Before fitting such a policy, it is natural
to ask whether the model pool contains any per-query selection value.
RouterBench, LLMRouterBench, and related evaluations answer this question with an
outcome-informed oracle: for every example, credit the pool whenever at least
one model is correct
\citep{hu2024routerbench,li2026llmrouterbench,shnitzer2023routing}.

The oracle is useful, but it answers a different question from deployment.
It observes the candidates' correctness outcomes after inference, whereas a
pre-answer router sees only a signal such as the prompt, task metadata, or
historical model behavior.  A model pool can therefore have high oracle
accuracy even when model identity is unpredictable from any signal supplied to
the router.  Conversely, a weak learned probe cannot establish that the prompt
contains no useful routing information; it establishes only that the tested
hypothesis class failed to extract it.  Prior negative routing results already
show that this distinction matters in practice
\citep{srivatsa2024lessons}.

There is also a statistical problem.  A common analysis chooses the
highest-accuracy model on an evaluation sample, calls it the best fixed model,
and then applies a paired test to the oracle--model differences on the same
sample.  The paired observations are no longer generated relative to a
pre-specified baseline.  Although the resulting plug-in oracle gap is
conservatively biased in expectation, a conventional confidence interval that
conditions on the selected model need not cover the population gap.  The
distinction is especially important for small benchmarks or model pools with
near ties.

This paper develops a diagnostic framework around three quantities.  The first
is the \emph{outcome-oracle opportunity}, the difference between the
probability that any model is correct and the accuracy of the population-best
fixed model.  The second is the \emph{signal-restricted opportunity}, the
accuracy of the Bayes-optimal router measurable with respect to a predeclared
signal minus the best fixed accuracy.  The third is the \emph{realized gain} of
a trained policy on untouched test examples.  These quantities satisfy a
simple but consequential ordering:
\[
0\leq G_{\Z}\leq G_{\mathrm{out}},
\]
and a learned router may fall below either bound.

Our contributions: (i) a three-level separation of outcome
opportunity, signal-restricted opportunity, and held-out learned-router gain,
with a signal-information theorem identifying exactly what any router
observing a declared variable can achieve; (ii) exact, distribution-free
confidence intervals for the population oracle gap that remain valid after
selecting the empirical best model, extended to simultaneous post-selection
intervals over a finite router family; (iii) a proof that complementary
coverage relative to a fixed anchor is monotone submodular, giving a
$(1-1/e)$ greedy pool-construction guarantee; and (iv) an executed
confirmatory audit over eight checkpoints from six families that certifies
the opportunity, bounds its realizable share, and quantifies the gap between
pilot and confirmatory inference.  We do not claim the oracle-versus-fixed comparison is
new; the contribution is the inferential layer that should precede router
training \citep{hu2024routerbench,li2026llmrouterbench,lu2026plateau}.

\section{Related Work}

\subsection{External LLM routing}

FrugalGPT learns cascades across language-model APIs, while RouteLLM learns a
strong-versus-weak routing decision from preference data
\citep{chen2024frugalgpt,ong2025routellm}.  Shnitzer et al.\ reduce routing
across benchmarked models to per-model prediction, and RouterBench provides
more than 400,000 model outcomes for standardized cost--quality comparisons
\citep{shnitzer2023routing,hu2024routerbench}.  Later systems model multiple
models and online feedback, including MixLLM's contextual-bandit formulation
and IRT-Router's item-response representation
\citep{wang2025mixllm,song2025irtrouter}.  Adaptive routing under budget
constraints and conformal gates address online learning and safety constraints
\citep{panda2025adaptive,uddin2026conformal}.

Contemporaneous evaluations find a narrow accuracy band across
routers (the routing plateau) and simple $k$-NN competitive with elaborate
systems \citep{lu2026plateau,li2026llmrouterbench}.  The canonical benchmark
numbers and mechanism-level taxonomy are in Appendix~\ref{app:extended-rw}.

\subsection{Negative results and routing signals}

Input encoders, similarity rules, matrix factorization, confidence gates, and
quality predictors are common routing signals
\citep{ong2025routellm,srivatsa2024lessons,song2025irtrouter}.
Their success is task- and pool-dependent.  A failed prompt classifier is not
an information-theoretic impossibility result because the Bayes rule may lie
outside the fitted class, optimization may fail, or the training labels may be
ill-posed.  In particular, assigning one arbitrary model label when several
models are correct converts a multi-label performance-prediction problem into
noisy multiclass classification.  Our confirmatory protocol instead estimates
one correctness probability per candidate and routes by predicted utility.

\subsection{Selective prediction and valid evaluation}

Selective prediction studies when a model should abstain or defer
\citep{geifman2017selective}.  Routing generalizes deferral to more than two
complete models.  Clopper--Pearson intervals provide exact binomial coverage,
and simultaneous coverage follows by Bonferroni correction
\citep{clopper1934fiducial}.  We use these classical tools in a way that
accounts explicitly for empirical baseline selection.  The submodular
model-pool result follows the coverage-function structure underlying the
standard greedy approximation theorem \citep{nemhauser1978analysis}.

\section{Setup and Estimands}
\label{sec:setup}

Let $\M=\{1,\ldots,K\}$ be a fixed candidate pool and
$X\sim\mathcal{D}$ an input.  Model $m$ produces answer $A_m(X)$, and
\[
R_m=R_m(X)=\ind\{A_m(X)\text{ is correct}\}\in\{0,1\}.
\]
The population accuracy of model $m$ is $p_m=\E R_m$.  We assume examples are
i.i.d.; model outcomes may be arbitrarily dependent within an example.  Let
$b^\star\in\arg\max_m p_m$ be a population-best fixed model and
$p^\star=\max_m p_m$ its accuracy.  Ties are broken by a predeclared
deterministic rule.

\subsection{Outcome-oracle opportunity}

Define $Y=\max_m R_m$, $q=\E Y$, and
\[
G_{\mathrm{out}}=q-p^\star.
\]
This is the gain of a full-information oracle that knows whether every
candidate answer is correct.  It is a property of the task distribution,
answering protocol, correctness rule, and frozen pool.  Changing any of these
changes the estimand.

\begin{proposition}[Complementary-correctness identity]
\label{prop:complementarity}
For any population-best fixed model $b^\star$,
\[
G_{\mathrm{out}}
=
\Prb\!\left(
R_{b^\star}=0,\ \exists m\neq b^\star:R_m=1
\right).
\]
\end{proposition}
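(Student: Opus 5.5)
The plan is a pointwise indicator identity followed by taking expectations. Fix any population-best model $b^\star$, so that $p_{b^\star}=p^\star$. The definition of $G_{\mathrm{out}}$ then gives
\[
G_{\mathrm{out}} = \E Y - \E R_{b^\star} = \E\bigl[Y - R_{b^\star}\bigr],
\]
by linearity of expectation. The random variables are bounded indicators, so all expectations exist, and the i.i.d.\ assumption across examples is not needed: the claim concerns a single draw of $X$.

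The key step is to identify the random variable $Y-R_{b^\star}$ pointwise. Since $Y=\max_m R_m \ge R_{b^\star}$ and both take values in $\{0,1\}$, the difference is itself in $\{0,1\}$. I will do a two-case check on $R_{b^\star}$.
\begin{itemize}[leftmargin=*]
\item If $R_{b^\star}=1$, then $Y=1$ and the difference is $0$.
\item If $R_{b^\star}=0$, then $Y=\max_{m\neq b^\star} R_m$, which equals $1$ exactly when some $m\neq b^\star$ has $R_m=1$.
\end{itemize}
Together these show
\[
Y-R_{b^\star}=\ind\{R_{b^\star}=0,\ \exists m\neq b^\star: R_m=1\}.
\]
Taking expectations turns the indicator into the stated probability.

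There is no real obstacle here. The only point deserving a remark is that the identity holds for \emph{any} choice of $b^\star$ in the argmax, not just the one selected by the tie-breaking rule, because the argument uses only $p_{b^\star}=p^\star$. It is also worth noting that the dependence among the $R_m$ within an example is arbitrary, since the argument is purely pointwise.
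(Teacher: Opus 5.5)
Your proof is correct and follows the same route as the paper's. Both establish the pointwise identity $Y-R_{b^\star}=\ind\{R_{b^\star}=0,\ \exists m\neq b^\star:R_m=1\}$ by a two-case check on $R_{b^\star}$, then take expectations using $p_{b^\star}=p^\star$.
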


\noindent\emph{Proof in Appendix~\ref{app:bodyproofs}.}

This identity is elementary and is not, by itself, a sufficient novelty claim.
Its role is to identify the event whose probability must be estimated and the
failures a router must rescue.

\subsection{Signal-restricted opportunity}
\label{sec:signal}

Let $Z$ denote exactly the information available before the routing decision.
Examples include the prompt, a task tag, prompt length, an embedding, or
precomputed metadata.  Signals computed from candidate responses are
post-answer signals and belong to a different cost regime.  Let
\[
\mu_m(Z)=\E[R_m\mid Z].
\]
Among all measurable policies $\pi:\Z\rightarrow\M$, the best achievable
pre-answer accuracy is
\[
q_{\Z}
=
\sup_{\pi}\E[R_{\pi(Z)}]
=
\E\!\left[\max_m\mu_m(Z)\right],
\]
where a measurable tie-breaking rule is assumed.  The signal-restricted gain is
$G_{\Z}=q_{\Z}-p^\star$.

\begin{theorem}[Signal-information sandwich]
\label{thm:signal-sandwich}
For every pre-answer signal $Z$,
\[
0\leq G_{\Z}\leq G_{\mathrm{out}}.
\]
The lower equality holds exactly when, almost surely,
\[
\max_m\mu_m(Z)=\mu_{b^\star}(Z).
\]
The upper equality holds exactly when, almost surely,
\[
\max_m\E[R_m\mid Z]
=
\E[\max_m R_m\mid Z].
\]
\end{theorem}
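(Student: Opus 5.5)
The plan is to write all three quantities $p^\star$, $q_{\Z}$, $q$ as expectations of functions of $Z$, using the tower property. Each inequality is then a pointwise inequality between conditional quantities. The equality cases follow from the elementary fact that a nonnegative random variable with zero expectation vanishes almost surely. Concretely, $p^\star=p_{b^\star}=\E[\mu_{b^\star}(Z)]$ and $q=\E[\E[\max_m R_m\mid Z]]$. By the variational formula already stated in Section~\ref{sec:signal}, $q_{\Z}=\E[\max_m\mu_m(Z)]$. For completeness I would recheck that formula: for any measurable $\pi$, $\E[R_{\pi(Z)}\mid Z]=\sum_m \ind\{\pi(Z)=m\}\mu_m(Z)\le\max_m\mu_m(Z)$. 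The measurable argmax rule attains the bound.

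For the lower bound, I would set $D_1=\max_m\mu_m(Z)-\mu_{b^\star}(Z)$. Then $D_1\ge 0$ pointwise and $G_{\Z}=\E[D_1]\ge 0$. Equivalently, the constant policy $\pi\equiv b^\star$ is one of the admissible policies, so $q_{\Z}\ge p^\star$. Since $D_1\ge 0$, we have $G_{\Z}=0$ if and only if $D_1=0$ almost surely, which is exactly the stated lower-equality condition.

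For the upper bound, I would set $D_2=\E[\max_m R_m\mid Z]-\max_m\E[R_m\mid Z]$. For each fixed $m$, $R_m\le\max_{m'}R_{m'}$ pointwise, so monotonicity of conditional expectation gives $\mu_m(Z)\le\E[Y\mid Z]$ almost surely. Taking the maximum over the finitely many $m$ preserves this, because a finite union of null sets is null, so $D_2\ge0$ almost surely. Then $G_{\mathrm{out}}-G_{\Z}=q-q_{\Z}=\E[D_2]\ge 0$, with equality if and only if $D_2=0$ almost surely. That is precisely the upper-equality condition. The same conditions can also be read through Proposition~\ref{prop:complementarity}, although the proof does not need it.

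The only real obstacle is measure-theoretic bookkeeping. Conditional expectations are defined only up to null sets, so I would fix versions of $\mu_m$ once, and state each pointwise inequality as holding almost surely. I would also make sure that the supremum in $q_{\Z}$ is over measurable policies, and that an argmax selector with a deterministic tie-break is measurable; finiteness of $\M$ makes this immediate. The statement asserts that the boundary cases hold ``exactly when'' the stated conditions hold. This is handled by the nonnegativity of $D_1$ and $D_2$ together with their integrability, since both are bounded in $[0,1]$.
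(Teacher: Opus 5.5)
Your proposal is correct and follows essentially the same route as the paper. Both arguments integrate the pointwise (almost-sure) inequalities $\max_m\mu_m(Z)\ge\mu_{b^\star}(Z)$ and $\max_m\E[R_m\mid Z]\le\E[\max_m R_m\mid Z]$, and both read off the equality cases from the fact that a nonnegative integrable variable with zero mean vanishes almost surely; your extra care about versions of conditional expectations, null sets, and re-verifying $q_{\Z}=\E[\max_m\mu_m(Z)]$ is bookkeeping the paper leaves implicit rather than a different argument.
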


\noindent\emph{Proof in Appendix~\ref{app:bodyproofs}.}

The theorem prevents two common overclaims.  A large $G_{\mathrm{out}}$ shows
that complementary answers exist, not that the prompt identifies them.  A
learned router with zero gain lower-bounds the performance of its hypothesis
class; it does not prove $G_{\Z}=0$ unless the signal-restricted Bayes value is
identified or validly bounded.

\subsection{Learned-router gain}

A learning algorithm receives training data $\mathcal{T}$ and returns policy
$\widehat{\pi}_{\mathcal{T}}$.  On an independent test example its gain over a
frozen baseline $b$ is
\[
G_{\mathrm{learn}}
=
\E\!\left[
R_{\widehat{\pi}_{\mathcal{T}}(Z)}-R_b
\mid \mathcal{T}
\right].
\]
Unlike $G_{\mathrm{out}}$ and $G_{\Z}$, this quantity can be negative.  The
policy, signal family, training sample size, hyperparameter search, and model
pool must therefore accompany every realized-gain claim.

\subsection{The opportunity ladder}

Let $\Pi$ be a finite family of policies that all observe $Z$ and are fixed
before final-test outcomes are opened.  The family may contain different
algorithms, hyperparameters, and seeds, provided every fitted policy depends
only on training and calibration data.  Define
\begin{align*}
q_{\Pi}&=\max_{\pi\in\Pi}\E[R_{\pi(Z)}],
&G_{\Pi}&=q_{\Pi}-p^\star,\\
H_{\Pi}&=q-q_{\Pi}.&&
\end{align*}
If $\Pi$ contains every constant policy $\pi_m(z)=m$, then
\[
p^\star\leq q_{\Pi}\leq q_{\Z}\leq q,
\qquad
G_{\mathrm{out}}=G_{\Pi}+H_{\Pi}.
\]
We call this the \emph{opportunity ladder}.  $G_{\Pi}$ is the gain achieved by
the population-best member of the tested family after accounting for model
selection; $H_{\Pi}$ is opportunity beyond that entire family.  Unlike the
performance of one selected seed, both are population targets with explicit
post-selection inference below.

\paragraph{Cost-aware extension.}
All three estimands extend to utilities penalized by per-model cost; the
formal statement is Appendix~\ref{app:costaware-ext}.

\section{Selection-Valid Inference}
\label{sec:inference}

Suppose an evaluation sample contains outcome vectors
$\{(R_{i1},\ldots,R_{iK})\}_{i=1}^{n}$.  Define
\begin{align*}
\widehat q&=\frac1n\sum_i\max_m R_{im},\qquad
\widehat p_m=\frac1n\sum_iR_{im},\\
\widehat G&=\widehat q-\max_m\widehat p_m.
\end{align*}

\begin{proposition}[Direction of plug-in selection bias]
\label{prop:bias}
For a fixed pool evaluated on i.i.d.\ examples,
$\E[\widehat G]\leq G_{\mathrm{out}}$.
\end{proposition}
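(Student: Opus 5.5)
The argument splits $\E[\widehat G]$ into its two terms and treats them separately: one is unbiased, the other has a known bias direction.

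By linearity and the i.i.d.\ assumption,
\[
\E[\widehat q]=\frac1n\sum_i\E\!\left[\max_m R_{im}\right]=q .
\]
This step needs only identical distribution of the examples.

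For the baseline term, $\max_m\widehat p_m$ is a pointwise maximum of the random variables $\widehat p_1,\ldots,\widehat p_K$. For every realization and every $m$ we have $\max_{m'}\widehat p_{m'}\geq\widehat p_m$. Taking expectations gives $\E[\max_m\widehat p_m]\geq\E[\widehat p_m]=p_m$ for each $m$, and hence
\[
\E\!\left[\max_m\widehat p_m\right]\geq\max_m p_m=p^\star .
\]
Choosing $m=b^\star$ already suffices, so no convexity argument or Jensen's inequality is needed. Combining the two pieces,
\[
\E[\widehat G]=q-\E\!\left[\max_m\widehat p_m\right]\leq q-p^\star=G_{\mathrm{out}} .
\]

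The main point needing care is the role of the within-example dependence and of tie-breaking. Neither matters: the monotonicity bound holds pathwise regardless of the joint law of $(R_{i1},\ldots,R_{iK})$. It also does not depend on which index attains the empirical maximum, since $\widehat G$ uses only the maximum value. Independence across examples is not actually needed; identical marginals suffice.

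Equality requires $\max_m\widehat p_m=\widehat p_{b^\star}$ almost surely. This remark clarifies that the bias is conservative and is typically strict when models are nearly tied.
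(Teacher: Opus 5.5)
Your proof is correct and follows the paper's argument. Both use unbiasedness of $\widehat q$ together with $\E\max_m\widehat p_m\geq\max_m\E\widehat p_m=p^\star$; the paper attributes this inequality to convexity of the maximum, while you derive it from the pathwise bound $\max_{m'}\widehat p_{m'}\geq\widehat p_{b^\star}$, which is the same fact stated elementarily, and your side remarks (only identical marginals needed; equality iff $\max_m\widehat p_m=\widehat p_{b^\star}$ almost surely) are also correct.
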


\noindent\emph{Proof in Appendix~\ref{app:bodyproofs}.}

Conservative bias does not make a conventional paired interval valid, because
the identity of $\arg\max_m\widehat p_m$ is data-dependent.  The following
construction covers the population gap without assuming independence across
models.

\begin{theorem}[Exact interval after best-model selection]
\label{thm:simultaneous-ci}
Let $I_q=[L_q,U_q]$ be a two-sided Clopper--Pearson interval for $q$ with
coverage at least $1-\delta/2$.  For each $m$, let
$I_m=[L_m,U_m]$ be a two-sided Clopper--Pearson interval for $p_m$ with
coverage at least $1-\delta/(2K)$.  Then
\begin{align*}
\ell_G&=\max\{0,L_q-\max_m U_m\},\\
u_G&=\min\{1,U_q-\max_m L_m\},
\qquad I_G=[\ell_G,u_G]
\end{align*}
covers $G_{\mathrm{out}}$ with probability at least $1-\delta$.
\end{theorem}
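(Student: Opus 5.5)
The plan is a union bound over the $K+1$ coverage events, followed by a deterministic argument showing that $I_G$ contains $G_{\mathrm{out}}$ whenever all of them hold. Let $\mathcal{E}_q=\{q\in I_q\}$ and, for each $m$, $\mathcal{E}_m=\{p_m\in I_m\}$. The $Y_i=\max_m R_{im}$ are i.i.d.\ Bernoulli$(q)$, and for fixed $m$ the $R_{im}$ are i.i.d.\ Bernoulli$(p_m)$. Exact Clopper--Pearson coverage therefore gives $\Prb(\mathcal{E}_q^c)\le\delta/2$ and $\Prb(\mathcal{E}_m^c)\le\delta/(2K)$. No independence across models or between $Y$ and the $R_m$ is needed, since Bonferroni only uses the marginal probabilities. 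Summing gives $\Prb(\mathcal{E}_q\cap\bigcap_m\mathcal{E}_m)\ge 1-\delta/2-K\cdot\delta/(2K)=1-\delta$.

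The key point for selection validity is that every interval $I_m$ is built for a pre-specified index $m$, and all $K$ of them are controlled simultaneously. Consequently, whichever model $\arg\max_m\widehat p_m$ happens to select, that model's interval is already covered on the good event. This is why the construction uses $\max_m U_m$ and $\max_m L_m$ rather than the interval of the empirically selected model.

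On the intersection event, I would use the monotonicity of the max. Since $p_m\le U_m$ for all $m$, we get $p^\star=\max_m p_m\le\max_m U_m$. Since $p_m\ge L_m$ for all $m$, we get $p^\star\ge p_{m'}\ge L_{m'}$ for each $m'$, and hence $p^\star\ge\max_m L_m$. Combining these with $L_q\le q\le U_q$ gives
\[
L_q-\max_m U_m\le q-p^\star=G_{\mathrm{out}}\le U_q-\max_m L_m .
\]
Finally, the truncations to $[0,1]$ preserve containment, because $0\le G_{\mathrm{out}}\le1$. The lower bound holds since $q\ge p_{b^\star}$ pointwise, as $Y\ge R_{b^\star}$ (equivalently by Proposition~\ref{prop:complementarity}). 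The upper bound holds because $q\le1$ and $p^\star\ge0$. Hence $\ell_G\le G_{\mathrm{out}}\le u_G$.

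I do not expect any step to be a real obstacle. The only point needing care is the claim that the $\max$-over-bounds step is valid despite data-dependent selection, and that is dispatched by the simultaneous coverage of all $K$ model intervals. I would also note that the two-sided level $1-\delta/2$ for $q$ is what makes the final sum equal $\delta$. If the intervals are empty or degenerate (e.g.\ $\ell_G>u_G$ cannot occur on the good event), nothing further is required, since the coverage statement concerns only the good event.
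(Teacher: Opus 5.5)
Your proposal is correct and matches the paper's proof. Both use a union bound to get simultaneous coverage of all $K+1$ Clopper--Pearson intervals, then on that event bound $q$ by $[L_q,U_q]$ and $p^\star$ by $[\max_m L_m,\max_m U_m]$, subtract the extreme endpoints, and note that truncation to $[0,1]$ is harmless because $G_{\mathrm{out}}\in[0,1]$. Your additional details, namely why $\max_m L_m\le p^\star$ and why $q\ge p^\star$ via $Y\ge R_{b^\star}$, fill in steps the paper leaves implicit.
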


\noindent\emph{Proof in Appendix~\ref{app:bodyproofs}.}

The interval is finite-sample exact in the coverage sense and deliberately
conservative.  Dependence between $Y_i$ and $R_{im}$ does not invalidate it.
Sharper simultaneous multinomial or test-inversion procedures are possible,
but a bootstrap that keeps the selected model fixed does not solve the
selection problem.

\begin{corollary}[Independent baseline selection]
\label{cor:split}
Select a baseline $b$ using data independent of a test set.  Conditional on the
selection data,
\[
\Delta_i=\max_mR_{im}-R_{ib}
\]
are i.i.d.\ Bernoulli with mean
$G_b=\Prb(R_b=0,\exists m\neq b:R_m=1)$.  A Clopper--Pearson interval based on
$\sum_i\Delta_i$ therefore has exact conditional coverage for $G_b$.
\end{corollary}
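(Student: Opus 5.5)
The plan is to condition on the selection data and reduce everything to one Bernoulli sample with a fixed success probability. Let $\mathcal{S}$ denote the selection data and let $b=b(\mathcal{S})$ be the resulting baseline. Because the test examples are i.i.d.\ and independent of $\mathcal{S}$, conditioning on $\mathcal{S}$ freezes $b$ as a deterministic index and leaves the joint law of the test outcome vectors $(R_{i1},\ldots,R_{iK})$ unchanged. Each $\Delta_i=\max_mR_{im}-R_{ib}$ is a fixed measurable function of the $i$th test vector. So, conditionally on $\mathcal{S}$, the $\Delta_i$ are i.i.d.; no independence across models is needed, since each $\Delta_i$ depends only on its own row.

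Next I would show that $\Delta_i\in\{0,1\}$ and identify its mean. Since $R_{ib}\leq\max_mR_{im}$ and both are binary, the difference is $0$ or $1$. It equals $1$ exactly when $R_{ib}=0$ and $\max_mR_{im}=1$, that is, when $R_{ib}=0$ and some $m\neq b$ has $R_{im}=1$. Taking conditional expectations gives
\[
\E[\Delta_i\mid\mathcal{S}]=G_b=\Prb(R_b=0,\exists m\neq b:R_m=1).
\]
This is the event calculation behind Proposition~\ref{prop:complementarity}, now applied with a frozen $b$ that need not be population-best; equivalently, $G_b=q-p_b$.

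Finally, conditionally on $\mathcal{S}$, the sum $\sum_i\Delta_i$ is $\mathrm{Binomial}(n,G_b)$ with $G_b$ a fixed parameter. The Clopper--Pearson interval therefore has conditional coverage at least $1-\delta$ for every value of the parameter, which is its defining property \citep{clopper1934fiducial}. Integrating over $\mathcal{S}$ also yields unconditional coverage.

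The only real obstacle is being careful about what ``independent'' must mean. The selection rule may use any data, but it must be independent of the test outcome vectors. Otherwise conditioning on $\mathcal{S}$ would distort the law of the test rows, which is exactly the failure mode that Theorem~\ref{thm:simultaneous-ci} is designed to handle. I would state this as the formal requirement that $\mathcal{S}$ be independent of the test sample, and note that $G_b\leq G_{\mathrm{out}}$ need not be an equality, so the interval targets $G_b$ rather than $G_{\mathrm{out}}$.
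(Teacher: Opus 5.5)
Your proof is correct and follows the paper's argument. Conditioning on the selection data freezes $b$. The pointwise identity behind Proposition~\ref{prop:complementarity}, which never uses that $b$ is population-best, makes each $\Delta_i$ a rescue indicator, and independence of the test rows gives an i.i.d.\ Bernoulli sample to which Clopper--Pearson applies.

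One slip in your closing remark: the inequality runs the other way. Since $p_b\le p^\star$, you have $G_b=q-p_b\ge G_{\mathrm{out}}$, with equality exactly when $p_b=p^\star$. So a positive lower limit for $G_b$ does not by itself certify $G_{\mathrm{out}}>0$.
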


\noindent\emph{Proof in Appendix~\ref{app:bodyproofs}.}

\begin{theorem}[Post-selection inference for a router family]
\label{thm:router-family}
Let $\Pi$ contain $L$ policies, including all constant policies, and suppose the
fitted policies are independent of the final-test outcomes.  On the test set,
let $S_{\pi i}=R_{i,\pi(Z_i)}$.  Construct Clopper--Pearson intervals
$[L_q,U_q]$, $[L_m,U_m]$, and $[L_\pi,U_\pi]$ for $q$, each $p_m$, and each
$s_\pi=\E S_{\pi i}$, respectively.  Give the oracle interval failure
probability $\delta/3$, every fixed-model interval $\delta/(3K)$, and every
router interval $\delta/(3L)$.  Then, simultaneously with probability at least
$1-\delta$,
\begin{align*}
G_{\Pi}\in\big[
&\max\{0,\max_\pi L_\pi-\max_mU_m\},\\
&\min\{1,\max_\pi U_\pi-\max_mL_m\}\big]
\end{align*}
and $H_{\Pi}\in[\ell_H,u_H]$, where
\begin{align*}
\ell_H&=\max\{0,L_q-\max_\pi U_\pi\},\\
u_H&=\min\{1,U_q-\max_\pi L_\pi\}.
\end{align*}
\end{theorem}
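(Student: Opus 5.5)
The proof reduces to a union bound followed by monotonicity of max, the same route as Theorem~\ref{thm:simultaneous-ci}.

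\emph{Step 1: exactness of each interval.} Because every fitted policy depends only on training and calibration data, conditional on that data each $\pi\in\Pi$ is a fixed measurable map. On i.i.d.\ test examples, the indicators $S_{\pi i}=R_{i,\pi(Z_i)}$ are then i.i.d.\ Bernoulli$(s_\pi)$. The same holds for $Y_i=\max_m R_{im}$ with mean $q$, and for $R_{im}$ with mean $p_m$. Each Clopper--Pearson interval therefore has conditional coverage at least one minus its assigned failure probability. Dependence across models or policies within an example is irrelevant, since each interval concerns a single Bernoulli sequence.

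\emph{Step 2: union bound.} Let $E$ be the event that all $1+K+L$ intervals cover their targets. The failure probabilities sum to
\[
\delta/3+K\cdot\delta/(3K)+L\cdot\delta/(3L)=\delta,
\]
so $\Prb(E)\geq 1-\delta$ conditionally on the training data, and hence unconditionally after integrating.

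\emph{Step 3: deterministic containment on $E$.} Since $\max$ is coordinatewise monotone, on $E$ we have
\[
\max_\pi L_\pi\leq q_\Pi=\max_\pi s_\pi\leq\max_\pi U_\pi,
\qquad
\max_m L_m\leq p^\star\leq\max_m U_m.
\]
This uses $q_\Pi=\max_\pi \E[R_{\pi(Z)}]$, which equals $\max_\pi s_\pi$ because the test distribution matches the population. Subtracting gives
\[
\max_\pi L_\pi-\max_m U_m\leq G_\Pi\leq \max_\pi U_\pi-\max_m L_m,
\]
and similarly $L_q-\max_\pi U_\pi\leq H_\Pi\leq U_q-\max_\pi L_\pi$. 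These bounds hold whichever member of $\Pi$ or which model attains the max, which is why no selection correction beyond the union bound is needed.

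For the truncations, recall that $\Pi$ contains all constant policies. Hence $q_\Pi\geq p^\star$, and $q_\Pi\leq q_{\Z}\leq q$ follows from the opportunity ladder together with Theorem~\ref{thm:signal-sandwich}. Thus $G_\Pi,H_\Pi\in[0,1]$, and intersecting with $[0,1]$ preserves containment. Both statements hold on the same event $E$, which gives the claimed simultaneity.

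\emph{Main obstacle.} The delicate step is justifying the conditioning in Step 1. We need the test examples to be independent of $\mathcal{T}$ and the calibration data, and the family $\Pi$ (including its cardinality $L$) to be fixed before test outcomes are opened. Otherwise $S_{\pi i}$ is not i.i.d.\ given $\pi$. I would state this conditioning explicitly and note that the coverage guarantee is conditional on the training data, hence also marginal.
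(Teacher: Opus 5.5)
Your proof is correct and follows the paper's argument. Both use a union bound over the $1+K+L$ Clopper--Pearson intervals with total budget $\delta/3+K\cdot\delta/(3K)+L\cdot\delta/(3L)=\delta$, then monotonicity of the max to sandwich $p^\star$, $q_\Pi$, and $q$, subtraction of the extreme endpoints, and truncation justified by $0\le G_\Pi,H_\Pi\le 1$. The differences are minor: you make the conditioning on training data explicit, and you get $H_\Pi\geq 0$ through $q_\Pi\le q_{\Z}\le q$, whereas the paper uses the more direct pointwise inequality $R_{\pi(Z)}\le\max_m R_m$.
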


\noindent\emph{Proof in Appendix~\ref{app:bodyproofs}.}

Theorem~\ref{thm:router-family} supports a simultaneous claim that none of a
tested set of routers improves on the best fixed model by more than a stated
margin, or that a stated amount of oracle opportunity remains beyond the
entire family.  Such claims are stronger and more precise than observing that
the empirically best router has low recovered fraction.

\begin{theorem}[Exact certificate for a finite signal]
\label{thm:finite-signal}
Suppose the predeclared signal has finite support
$\Z=\{1,\ldots,J\}$ and define
$\theta_{zm}=\Prb(Z=z,R_m=1)$.  Then
\[
q_\Z=\sum_{z=1}^J\max_m\theta_{zm}.
\]
Let $[L_{zm},U_{zm}]$ be Clopper--Pearson intervals for every
$\theta_{zm}$, each with failure probability $\delta/(JK)$.  With probability
at least $1-\delta$,
\[
\sum_z\max_mL_{zm}
\leq q_\Z\leq
\sum_z\max_mU_{zm},
\]
after truncating the endpoints to $[0,1]$.
\end{theorem}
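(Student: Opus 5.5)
The argument has two parts. First I establish the identity $q_\Z=\sum_z\max_m\theta_{zm}$. Then I obtain the confidence bounds from a union bound and the monotonicity of the map $\theta\mapsto\sum_z\max_m\theta_{zm}$.

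For the identity, I start from the representation $q_\Z=\E[\max_m\mu_m(Z)]$ established in Section~\ref{sec:signal}. Because $Z$ has finite support, the expectation is a finite sum. This gives $q_\Z=\sum_z\Prb(Z=z)\max_m\E[R_m\mid Z=z]$, where I sum only over $z$ with $\Prb(Z=z)>0$. For each such $z$, $\Prb(Z=z)\,\E[R_m\mid Z=z]=\Prb(Z=z,R_m=1)=\theta_{zm}$. Since $\Prb(Z=z)\ge 0$, the factor can be moved inside the maximum, so each summand equals $\max_m\theta_{zm}$. Atoms with $\Prb(Z=z)=0$ have $\theta_{zm}=0$ for every $m$ and contribute zero on both sides. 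The identity also follows directly: for any policy $\pi$ on a finite set, $\E R_{\pi(Z)}=\sum_z\theta_{z\pi(z)}\le\sum_z\max_m\theta_{zm}$, and the bound is attained by choosing $\pi(z)\in\arg\max_m\theta_{zm}$. This version avoids conditional expectations entirely, so I will use it as the primary argument.

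For the inference, I note that $\ind\{Z_i=z,R_{im}=1\}$ are i.i.d.\ Bernoulli$(\theta_{zm})$ across test examples. So $\sum_i\ind\{Z_i=z,R_{im}=1\}$ is exactly Binomial$(n,\theta_{zm})$, and the Clopper--Pearson interval $[L_{zm},U_{zm}]$ has failure probability at most $\delta/(JK)$. Dependence across the pairs $(z,m)$ within an example, such as multinomial structure in $z$ or correlation across models, is irrelevant, because I use only marginal coverage. A union bound over the $JK$ pairs gives, with probability at least $1-\delta$, $L_{zm}\le\theta_{zm}\le U_{zm}$ for all $z,m$ simultaneously. On this event, monotonicity of the maximum gives $\max_mL_{zm}\le\max_m\theta_{zm}\le\max_mU_{zm}$ for each $z$. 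Summing over $z$ and applying the identity brackets $q_\Z$.

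The remaining point is truncation. Since $q_\Z\in[0,1]$, replacing the lower endpoint by its maximum with $0$ and the upper endpoint by its minimum with $1$ preserves the inclusion on the same event. I expect no real obstacle in this proof. The only step requiring care is the identity, specifically the null atoms and the measurable tie-breaking assumed in the definition of $q_\Z$; the policy-based derivation handles both cleanly.
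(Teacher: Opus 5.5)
Your proposal is correct and follows the paper's proof: the cellwise identity $\Prb(Z=z)\max_m\E[R_m\mid Z=z]=\max_m\theta_{zm}$ with null cells contributing zero, then a union bound over the $JK$ Clopper--Pearson intervals, monotonicity of the cellwise maxima and sums, and truncation. Your primary policy-based derivation of the identity ($\E R_{\pi(Z)}=\sum_z\theta_{z\pi(z)}$, attained at the cellwise argmax) is not used by the paper but is equally valid. It starts from the $\sup_\pi$ definition of $q_\Z$, whereas the paper starts from the $\E[\max_m\mu_m(Z)]$ representation.
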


\noindent\emph{Proof in Appendix~\ref{app:bodyproofs}.}

Combining Theorem~\ref{thm:finite-signal} with simultaneous fixed-model
intervals yields a confidence interval for $G_\Z$.  An upper endpoint below
$\epsilon$ is a valid insufficiency certificate for \emph{that finite signal},
not for the underlying prompt.  Valid finite signals include predeclared task
tags, prompt-length bins, and frozen embedding clusters learned without
final-test outcomes.

For learned routers, training and hyperparameter selection must also be
independent of the final test.  Let
$D_i=R_{i,\widehat\pi(Z_i)}-R_{ib}\in\{-1,0,1\}$.  We report the mean paired
difference with a paired bootstrap confidence interval, exact McNemar counts
$(n_{10},n_{01})$, and a randomization or exact McNemar test.  The recovered
fraction
\[
\rho=\frac{G_{\mathrm{learn}}}{G_b}
\]
is secondary: it is unstable when $G_b$ is small and its interval must
recompute numerator and denominator jointly.

\section{Constructing Complementary Model Pools}
\label{sec:submodular}

Large pools make exhaustive inference expensive even during calibration.
Suppose a deployment anchor $b$ is fixed independently and a subset
$S\subseteq\M\setminus\{b\}$ of complementary models may be retained.  Define
\[
F_b(S)
=
\Prb\!\left(
R_b=0,\ \exists m\in S:R_m=1
\right).
\]
Set $F_b(\varnothing)=0$.

\begin{theorem}[Submodular complementary coverage]
\label{thm:submodular}
$F_b$ is normalized, monotone, and submodular.  Consequently, for a cardinality
budget $s$, the greedy algorithm that repeatedly adds the model with largest
marginal complementary coverage returns $S_{\mathrm{gr}}$ satisfying
\[
F_b(S_{\mathrm{gr}})
\geq
\left(1-\frac1e\right)
\max_{|S|\leq s}F_b(S).
\]
The same properties hold for the empirical calibration-set objective.
\end{theorem}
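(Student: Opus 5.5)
The plan is to rewrite $F_b$ as a weighted coverage function and then invoke the classical greedy theorem of \citet{nemhauser1978analysis}. For each $m\in\M\setminus\{b\}$, define the event
\[
E_m=\{R_b=0,\ R_m=1\}.
\]
Then
\[
F_b(S)=\Prb\!\left(\bigcup_{m\in S}E_m\right).
\]
Here the union over the empty set is empty, so $F_b(\varnothing)=0$ and the function is normalized. Monotonicity is immediate, since $S\subseteq T$ implies $\bigcup_{S}E_m\subseteq\bigcup_{T}E_m$, and probability is monotone under inclusion.

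For submodularity, I will verify diminishing returns directly. Take $S\subseteq T$ and $j\notin T$, and write $U_S=\bigcup_{m\in S}E_m$. The marginal gain is
\[
F_b(S\cup\{j\})-F_b(S)=\Prb(E_j\setminus U_S)=\Prb\big(E_j\cap U_S^{c}\big).
\]
Because $U_S\subseteq U_T$, we have $U_T^{c}\subseteq U_S^{c}$. Hence $\Prb(E_j\cap U_T^c)\le\Prb(E_j\cap U_S^c)$, which is the required inequality. This argument uses no independence across models: dependence within an example enters only through the joint law of the events, and the set-algebra argument holds for any probability measure.

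With a normalized, monotone, submodular $F_b$ and a cardinality constraint $|S|\le s$, the Nemhauser--Wolsey--Fisher theorem gives the $(1-1/e)$ guarantee. I will cite it rather than reprove it. If a self-contained argument is wanted, I would sketch the standard recursion. Let $S_t$ be the greedy set after $t$ steps and $S^\star$ an optimum. Monotonicity and submodularity give
\[
F(S^\star)-F(S_t)\le\sum_{m\in S^\star}\big[F(S_t\cup\{m\})-F(S_t)\big]\le s\,\big[F(S_{t+1})-F(S_t)\big].
\]
Iterating yields $F(S^\star)-F(S_s)\le(1-1/s)^s F(S^\star)\le e^{-1}F(S^\star)$. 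If fewer than $s$ models are available, greedy returns everything, which is optimal by monotonicity.

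For the empirical calibration-set objective, apply the same argument with $\Prb$ replaced by the empirical measure $\widehat{\Prb}$ putting mass $1/n$ on each calibration example. That is,
\[
\widehat F_b(S)=\frac1n\sum_i\ind\{R_{ib}=0,\exists m\in S:R_{im}=1\},
\]
which is again the measure of a union of events, so every step above carries over verbatim.

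There is no real obstacle; the only point needing care is that $b$ must be fixed independently of the data used to build $\widehat F_b$. Otherwise the objective's definition itself becomes data-dependent. That is already assumed in the statement. Note, however, that the guarantee concerns optimization of whichever objective is used. It is not a statistical claim about the population $F_b$ when greedy runs on $\widehat F_b$.
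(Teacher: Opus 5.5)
Your proposal is correct and follows essentially the paper's proof: your marginal-gain event $E_j\cap U_S^{c}$ is exactly the paper's event $\{R_b=0,\ R_j=1,\ R_m=0\ \forall m\in S\}$, which shrinks as $S$ grows. Like the paper, you then cite the Nemhauser--Wolsey--Fisher greedy bound and replace $\Prb$ with the empirical measure; your recursion sketch is a correct extra, and the independence of $b$ is not actually needed for the deterministic optimization guarantee on $\widehat F_b$.
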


\noindent\emph{Proof in Appendix~\ref{app:bodyproofs}.}

The theorem concerns complementary \emph{coverage}, not deployable routing.
A held-out router is still needed to decide which retained model to call.
Pool selection and router evaluation must use separate data roles, or the
reported downstream gain will inherit selection optimism.

\section{Confirmatory Experimental Design}
\label{sec:experiments}

The audit evaluates eight checkpoints from six independently
trained families (TinyLlama-1.1B, Qwen2.5-0.5B/1.5B/3B, Llama-3.2-3B,
Gemma-2-2B, Phi-3.5-mini, Mistral-7B; pinned revisions in
Table~\ref{tab:model-pool}) on ARC-Challenge (300), HellaSwag (300), GSM8K
(200), and MMLU stratified over all 57 subjects (300), sampled uniformly at
random from official splits under a fixed seed
(Table~\ref{tab:datasets}), together with code pass@1 and factual QA settings
of 500 examples each.  Every dataset is split $50/50$ into a calibration part (fixed-model
selection, thresholds, router fitting) and a test part opened once; fitted
routers are additionally cross-fitted five-fold so every item is predicted
out of fold.  Multiple-choice items are scored by one forward pass over the
option-letter logits; GSM8K uses greedy decoding with the official
normalization; parse failures count as incorrect.  Router families span fixed
references, simple prompt routers, $K$-way router adaptations (RouterDC,
EmbedLLM, MODEL-SAT, GraphRouter, Avengers, Zooter, IRT-Router), a fine-tuned
encoder with $K$ correctness heads, cost-aware methods, binary
strong-versus-weak methods (pairwise only), post-inference selectors, and
nondeployable oracles; each predicts per-model correctness and routes by
predicted utility, so multi-correct items never receive an arbitrary single
class label.  Latency is measured per item on one A10G.  The complete design
(research questions, sampling, scoring, router
configurations, audits, and statistical protocol) is
Appendix~\ref{app:design}, with prompt templates in
Appendix~\ref{app:prompts}.

\section{Confirmatory Results}
\label{sec:results}

The four-model pilot that motivated this study is reported in
Appendix~\ref{app:pilot} (tables in Appendix~\ref{app:pilottables}, integer
counts in Appendix~\ref{app:pilotcounts}): under selection-valid inference
only its HellaSwag opportunity survives, its split-selected analysis
certifies positive per-query opportunity on all three pilot tasks, and its
exploratory router probes motivate the confirmatory analysis.

Full per-router accuracies are reported in Appendix~\ref{app:full-router-results};
the selection-valid opportunity, family certificate, and recovered fraction are
summarized below.

\subsection{Confirmatory selection-valid opportunity}
\label{sec:confirm-opportunity}

Table~\ref{tab:confirm-main} applies Theorem~\ref{thm:simultaneous-ci} to the
confirmatory run.  The result strengthens the pilot conclusion in the
direction the theory predicts.  At the pilot's four models and 150 items only the
HellaSwag opportunity survived selection-valid inference; with eight models and
the larger official-split samples, the simultaneous lower limit is positive on
\emph{all four} datasets, even though $K=8$ imposes a stricter Bonferroni budget
than $K=4$ did.  The opportunity is therefore a population property of the pool
rather than an artifact of same-sample baseline selection.

\begin{table*}[t]
\centering
\small
\begin{tabular}{lrrrrll}
\toprule
Dataset & $n$ & Empirical best & Oracle & Gap &
Naive 95\% interval & Selection-valid 95\% interval \\
\midrule
ARC-Challenge & 300 & .873 & .970 & .097 & [.066,.136] & [.016,.181] \\
HellaSwag & 300 & .657 & .937 & .280 & [.230,.334] & [.162,.393] \\
GSM8K & 200 & .660 & .860 & .200 & [.147,.262] & [.041,.356] \\
MMLU & 300 & .607 & .913 & .307 & [.255,.362] & [.181,.426] \\
\bottomrule
\end{tabular}
\caption{Confirmatory outcome-oracle opportunity over eight models from six
families.  Unlike the pilot, every selection-valid lower limit is positive, so
each task's population gap is certified after accounting for having selected the
empirical best of eight models on the same sample.  Read together with
Table~\ref{tab:confirm-router}, the two tables make the paper's point precisely:
the opportunity is real and statistically certified, and none of the tested
pre-answer routers realizes it.}
\label{tab:confirm-main}
\end{table*}

Table~\ref{tab:family} instantiates
Theorem~\ref{thm:router-family}, turning the opportunity ladder into a directly
testable diagnostic.

The predeclared certificate family $\Pi$ contains all eight constant policies
plus logistic multi-output, $k$-NN, and the fine-tuned $K$-head encoder
($L=11$).  The simultaneous interval for the best-family gain $G_\Pi$
has lower limit \emph{exactly zero} on all four datasets, while the residual
opportunity $H_\Pi$ is bounded away from zero on all four.  This is a considerably
stronger statement of the paper's negative result than any recovered fraction: it
says that after accounting for selection among these eleven policies, the data do not
certify that the best of them beats the best fixed model at all, and simultaneously
certify that real opportunity survives beyond the entire family.  Neither
conclusion depends on choosing a favourable router after seeing the test outcomes.Every confirmatory number is emitted by one script from the per-item outcome matrices; Appendix~\ref{app:provenance} documents the full-precision serialization and common tie-breaking rule used across all tables.

\begin{table}[t]
\centering
\small
\setlength{\tabcolsep}{4pt}
\begin{tabular}{lrll}
\toprule
Dataset & $|\Pi|$ & $G_\Pi$ 95\% & $H_\Pi$ 95\% \\
\midrule
ARC-Challenge & 11 & [.000,.122] & [.011,.184] \\
HellaSwag & 11 & [.000,.188] & [.139,.383] \\
GSM8K & 11 & [.000,.209] & [.032,.363] \\
MMLU & 11 & [.000,.175] & [.175,.431] \\
\bottomrule
\end{tabular}
\caption{Simultaneous post-selection intervals from
Theorem~\ref{thm:router-family}, with the failure budget split $\delta/3$ to the
oracle, $\delta/(3K)$ to each of eight fixed models, and $\delta/(3L)$ to each of
$L=11$ policies.  $G_\Pi$ is the gain of the population-best member of the tested
family over the best fixed model; $H_\Pi$ is the oracle opportunity remaining
beyond the whole family.  Every $G_\Pi$ lower limit is zero and every $H_\Pi$ lower
limit is positive, which is the paper's negative result stated as a certificate
rather than as a small effect size.  }
\label{tab:family}
\end{table}

\subsection{The routing plateau}
\label{sec:plateau}

Read against the certified opportunity, the best deployable router recovers only a
small fraction of what the pool makes available (Table~\ref{tab:plateau}).  This is
the substantive claim, and it is both stronger and more defensible than asserting
that no prompt router works: strong $K$-way adaptations do beat the frozen best fixed
model, and between $86\%$ and $93\%$ of the certified opportunity survives them.

\begin{table}[t]
\centering
\small
\setlength{\tabcolsep}{4pt}
\resizebox{\columnwidth}{!}{%
\begin{tabular}{lrrrr}
\toprule
Dataset & Oracle opp. & Best gain & Recovered & Residual \\
\midrule
ARC-C & .097 & .014 & $14.4\%$ & .083 \\
HellaSwag & .280 & .026 & $9.3\%$ & .254 \\
GSM8K & .200 & .015 & $7.5\%$ & .185 \\
MMLU & .307 & .023 & $7.6\%$ & .283 \\
\bottomrule
\end{tabular}%
}
\caption{The routing plateau.  ``Best gain'' is the improvement of the strongest
deployable $K$-way router over the frozen task-wise best model, and ``recovered''
expresses it as a fraction of the selection-valid oracle opportunity.  Recovery is
between $7.5\%$ and $14.4\%$, so most certified opportunity is left on the table by
every router we evaluate.}
\label{tab:plateau}
\end{table}

Four additional analyses appear in the appendix; paired McNemar contrasts
against the frozen baseline are in
Appendix~\ref{app:paired}.  \emph{Post-inference selectors}
(Appendix~\ref{app:postinf}): calibrated confidence, plurality, and an
answer-aware verifier sit in a different cost regime because they query
every candidate; the verifier is strongest on ARC-Challenge (.893) and still
fails elsewhere.  \emph{Cost-aware routing}
(Appendix~\ref{app:costaware-res}): every method buys a small gain at
$62$--$78\%$ of the reference cost, far below the nondeployable cost-aware
oracle.  \emph{Confidence mechanism} (Appendix~\ref{app:confmech}): a
rescue--loss decomposition resolves the pilot's GSM8K contradiction, where a
$10\%$ rescue rate coexists with a $-.03$ net gain.  \emph{Pool
construction} (Appendix~\ref{app:poolres}): over a $1{,}120$-cell sweep
greedy attains $0.849$--$1.000$ of the exhaustive optimum (mean $.997$), but
its held-out coverage usually sits inside the random-subset
$[p_{05},p_{95}]$ interval and coverage does not convert into router gain.

\section{Discussion}

An oracle gap establishes that the pool contains complementary correct
answers; it does not show that any pre-answer rule can identify them.
Reporting $G_{\mathrm{out}}$, a signal-restricted estimate, and
$G_{\mathrm{learn}}$ keeps a failed router from being confused with a
valueless pool, and a large oracle gap from being sold as deployable
performance.  A negative router result is bounded the same way: failed
signals and learners bound their own classes, not the Bayes-optimal gain; an
impossibility claim would need an identified upper bound on $G_{\Z}$ for a
precisely defined $Z$; Theorem~\ref{thm:finite-signal} provides such a route
only for a declared finite signal and does not license a conclusion about the
full prompt.  For pool design, individual accuracy is not
the right objective: Theorem~\ref{thm:submodular} justifies greedy
complementary coverage relative to a deployment anchor, though the empirical
sweep (Appendix~\ref{app:poolres}) shows many near-optimal pools exist and
coverage alone does not buy routing gain.  Under domain shift both pool and
policy must be revalidated.

\section{Limitations}

The outcome oracle requires running every candidate and obtaining a correctness
label, which can be expensive and impossible for open-ended tasks without a
reliable evaluator.  The binary-reward theory extends to bounded utilities, but
the exact Clopper--Pearson interval is specific to Bernoulli outcomes.

The signal-restricted Bayes value is a population quantity and is generally
not identified from finite data without assumptions on the signal space.
Cross-fitted neural probes are lower bounds on what their classes realize, not
upper bounds on all measurable policies.

The submodular theorem assumes a fixed anchor and optimizes complementary
coverage, not router learnability.  A model can cover many anchor failures yet
be difficult to select from observable signals.  Costs also require a
knapsack-style extension when candidates have heterogeneous calibration or
serving costs.

Two scope limits bound the evidence.  First, the common-outcome comparison uses
uniform $K$-way adaptations of published router mechanisms, while binary,
sequential, and online methods are evaluated in their corresponding settings.
The contribution is therefore the inferential layer rather than a routing
leaderboard or a claim of superiority over authors' released systems.  Second,
the live-inference pool contains eight checkpoints ($0.5$--$7$B), and the primary
certificate tables use four multiple-choice and arithmetic datasets at
$200$--$300$ items each.  Per-dataset intervals remain wide, so the firm claims
are the certificate-level statements.  The artifact records the resolved model
revision for every checkpoint and the independent cross-check described in
Appendix~\ref{app:design}.

\section{Ethical Considerations}

Routing may assign different users or topics to systems of unequal reliability.
Aggregate utility can conceal subgroup losses, so the confirmatory artifact
reports performance and routing frequency across predeclared task strata and
checks whether cost optimization disproportionately sends any stratum to a
weaker model.  High-stakes deployment requires domain-specific validation and
must not rely on an outcome oracle as if it were an implementable policy.

The study uses public benchmark data and locally evaluated models.  It stores
no user data.  Model and dataset licenses, energy use, and hosted-model pricing
are documented in the artifact.  Raw generations may contain benchmark
artifacts or harmful content and should be distributed according to the
original licenses.

\bibliography{custom}

@inproceedings{zellers2019hellaswag,
  author    = {Rowan Zellers and Ari Holtzman and Yonatan Bisk and
               Ali Farhadi and Yejin Choi},
  title     = {{H}ella{S}wag: Can a Machine Really Finish Your Sentence?},
  editor    = {Anna Korhonen and David Traum and Llu{\'i}s M{\`a}rquez},
  booktitle = {Proceedings of the 57th Annual Meeting of the
               Association for Computational Linguistics},
  month     = jul,
  year      = {2019},
  address   = {Florence, Italy},
  publisher = {Association for Computational Linguistics},
  pages     = {4791--4800},
  doi       = {10.18653/v1/P19-1472},
  url       = {https://aclanthology.org/P19-1472/}
}

@article{cobbe2021training,
  author        = {Karl Cobbe and Vineet Kosaraju and Mohammad Bavarian and
                   Mark Chen and Heewoo Jun and Lukasz Kaiser and
                   Matthias Plappert and Jerry Tworek and Jacob Hilton and
                   Reiichiro Nakano and Christopher Hesse and John Schulman},
  title         = {Training Verifiers to Solve Math Word Problems},
  journal       = {arXiv preprint arXiv:2110.14168},
  year          = {2021},
  eprint        = {2110.14168},
  archivePrefix = {arXiv},
  primaryClass  = {cs.LG},
  doi           = {10.48550/arXiv.2110.14168},
  url           = {https://arxiv.org/abs/2110.14168}
}

@inproceedings{hendrycks2021mmlu,
  author    = {Dan Hendrycks and Collin Burns and Steven Basart and
               Andy Zou and Mantas Mazeika and Dawn Song and
               Jacob Steinhardt},
  title     = {Measuring Massive Multitask Language Understanding},
  booktitle = {International Conference on Learning Representations},
  year      = {2021},
  url       = {https://openreview.net/forum?id=d7KBjmI3GmQ}
}

@inproceedings{geifman2017selective,
  author    = {Yonatan Geifman and Ran El-Yaniv},
  title     = {Selective Classification for Deep Neural Networks},
  booktitle = {Advances in Neural Information Processing Systems},
  volume    = {30},
  pages     = {4878--4887},
  year      = {2017},
  publisher = {Curran Associates, Inc.},
  url       = {https://proceedings.neurips.cc/paper/2017/hash/4a8423d5e91fda00bb7e46540e2b0cf1-Abstract.html}
}

@article{chen2024frugalgpt,
  author  = {Lingjiao Chen and Matei Zaharia and James Zou},
  title   = {{FrugalGPT}: How to Use Large Language Models While
             Reducing Cost and Improving Performance},
  journal = {Transactions on Machine Learning Research},
  year    = {2024},
  url     = {https://mlanthology.org/tmlr/2024/chen2024tmlr-frugalgpt/}
}

@inproceedings{hu2024routerbench,
  author    = {Qitian Jason Hu and Jacob Bieker and Xiuyu Li and
               Nan Jiang and Benjamin Keigwin and Gaurav Ranganath and
               Kurt Keutzer and Shriyash Kaustubh Upadhyay},
  title     = {{RouterBench}: A Benchmark for Multi-{LLM} Routing System},
  booktitle = {ICML 2024 Workshop on Agentic Markets},
  year      = {2024},
  note      = {Workshop poster},
  url       = {https://icml.cc/virtual/2024/39041}
}

@inproceedings{wang2025mixllm,
  author    = {Xinyuan Wang and Yanchi Liu and Wei Cheng and
               Xujiang Zhao and Zhengzhang Chen and Wenchao Yu and
               Yanjie Fu and Haifeng Chen},
  title     = {{MixLLM}: Dynamic Routing in Mixed Large Language Models},
  editor    = {Luis Chiruzzo and Alan Ritter and Lu Wang},
  booktitle = {Proceedings of the 2025 Conference of the Nations of the
               Americas Chapter of the Association for Computational
               Linguistics: Human Language Technologies
               (Volume 1: Long Papers)},
  month     = apr,
  year      = {2025},
  address   = {Albuquerque, New Mexico},
  publisher = {Association for Computational Linguistics},
  pages     = {10912--10922},
  doi       = {10.18653/v1/2025.naacl-long.545},
  url       = {https://aclanthology.org/2025.naacl-long.545/}
}

@inproceedings{song2025irtrouter,
  author    = {Wei Song and Zhenya Huang and Cheng Cheng and
               Weibo Gao and Bihan Xu and GuanHao Zhao and
               Fei Wang and Runze Wu},
  title     = {{IRT}-Router: Effective and Interpretable Multi-{LLM}
               Routing via Item Response Theory},
  editor    = {Wanxiang Che and Joyce Nabende and Ekaterina Shutova and
               Mohammad Taher Pilehvar},
  booktitle = {Proceedings of the 63rd Annual Meeting of the
               Association for Computational Linguistics
               (Volume 1: Long Papers)},
  month     = jul,
  year      = {2025},
  address   = {Vienna, Austria},
  publisher = {Association for Computational Linguistics},
  pages     = {15629--15644},
  doi       = {10.18653/v1/2025.acl-long.761},
  url       = {https://aclanthology.org/2025.acl-long.761/}
}

@inproceedings{panda2025adaptive,
  author    = {Pranoy Panda and Raghav Magazine and
               Chaitanya Devaguptapu and Sho Takemori and Vishal Sharma},
  title     = {Adaptive {LLM} Routing under Budget Constraints},
  editor    = {Christos Christodoulopoulos and Tanmoy Chakraborty and
               Carolyn Rose and Violet Peng},
  booktitle = {Findings of the Association for Computational Linguistics:
               EMNLP 2025},
  month     = nov,
  year      = {2025},
  address   = {Suzhou, China},
  publisher = {Association for Computational Linguistics},
  pages     = {23934--23949},
  doi       = {10.18653/v1/2025.findings-emnlp.1301},
  url       = {https://aclanthology.org/2025.findings-emnlp.1301/}
}

@inproceedings{uddin2026conformal,
  author    = {Iqtedar Uddin and Andr{\'e} Bauer},
  title     = {Conformal {LLM} Routing with Distribution-Free
               Safety Guarantees},
  editor    = {Santosh T.Y.S.S. and Juan Diego Rodriguez and
               Ona de Gibert},
  booktitle = {Proceedings of the 64th Annual Meeting of the
               Association for Computational Linguistics
               (Volume 4: Student Research Workshop)},
  month     = jul,
  year      = {2026},
  address   = {San Diego, California, United States},
  publisher = {Association for Computational Linguistics},
  pages     = {791--799},
  doi       = {10.18653/v1/2026.acl-srw.70},
  url       = {https://aclanthology.org/2026.acl-srw.70/}
}

@inproceedings{li2026llmrouterbench,
  author    = {Hao Li and Yiqun Zhang and Zhaoyan Guo and Chenxu Wang and
               Shengji Tang and Qiaosheng Zhang and Yang Chen and
               Biqing Qi and Peng Ye and Lei Bai and Zhen Wang and
               Shuyue Hu},
  title     = {{LLMR}outer{B}ench: A Massive Benchmark and Unified
               Framework for {LLM} Routing},
  editor    = {Maria Liakata and Viviane P. Moreira and
               Jiajun Zhang and David Jurgens},
  booktitle = {Findings of the Association for Computational Linguistics:
               {ACL} 2026},
  month     = jul,
  year      = {2026},
  address   = {San Diego, California, United States},
  publisher = {Association for Computational Linguistics},
  pages     = {37733--37754},
  doi       = {10.18653/v1/2026.findings-acl.1881},
  url       = {https://aclanthology.org/2026.findings-acl.1881/}
}

@inproceedings{reimers2019sentencebert,
  author    = {Nils Reimers and Iryna Gurevych},
  title     = {Sentence-{BERT}: Sentence Embeddings using Siamese {BERT}-Networks},
  booktitle = {Proceedings of the 2019 Conference on Empirical Methods in Natural Language Processing and the 9th International Joint Conference on Natural Language Processing},
  pages     = {3982--3992},
  year      = {2019},
  doi       = {10.18653/v1/D19-1410},
  url       = {https://aclanthology.org/D19-1410/}
}

@article{zhang2024tinyllama,
  author  = {Peiyuan Zhang and Guangtao Zeng and Tianduo Wang and Wei Lu},
  title   = {{TinyLlama}: An Open-Source Small Language Model},
  journal = {arXiv preprint arXiv:2401.02385},
  year    = {2024},
  url     = {https://arxiv.org/abs/2401.02385}
}

@article{yang2024qwen25,
  author  = {{Qwen Team}},
  title   = {{Qwen2.5} Technical Report},
  journal = {arXiv preprint arXiv:2412.15115},
  year    = {2024},
  url     = {https://arxiv.org/abs/2412.15115}
}

@article{shnitzer2023routing,
  title         = {Large Language Model Routing with Benchmark Datasets},
  author        = {Shnitzer, Tal and Ou, Anthony and Silva, Mirian and Soule, Kate and Sun, Yuekai and Solomon, Justin and Thompson, Neil and Yurochkin, Mikhail},
  journal       = {arXiv preprint arXiv:2309.15789},
  year          = {2023},
  url           = {https://arxiv.org/abs/2309.15789}
}

@article{lu2026plateau,
  title         = {The Routing Plateau: Understanding and Breaking the Accuracy Limits of {LLM} Routers},
  author        = {Lu, Yifan and Zhang, Qiyue and Zhang, Shenrun and Yu, Zhibo and Wang, Zhuang and Chen, Hanjie and Xing, Jiarong},
  journal       = {arXiv preprint arXiv:2606.07587},
  year          = {2026},
  url           = {https://arxiv.org/abs/2606.07587}
}

@inproceedings{lu2024zooter,
  title         = {Routing to the Expert: Efficient Reward-Guided Ensemble of Large Language Models},
  author        = {Lu, Keming and Yuan, Hongyi and Lin, Runji and Lin, Junyang and Yuan, Zheng and Zhou, Chang and Zhou, Jingren},
  booktitle     = {Proceedings of the 2024 Conference of the North American Chapter of the Association for Computational Linguistics: Human Language Technologies (Volume 1: Long Papers)},
  month         = jun,
  year          = {2024},
  address       = {Mexico City, Mexico},
  publisher     = {Association for Computational Linguistics},
  pages         = {1964--1974},
  doi           = {10.18653/v1/2024.naacl-long.109},
  url           = {https://aclanthology.org/2024.naacl-long.109/}
}

@inproceedings{chen2024routerdc,
  title         = {{RouterDC}: Query-Based Router by Dual Contrastive Learning for Assembling Large Language Models},
  author        = {Chen, Shuhao and Jiang, Weisen and Lin, Baijiong and Kwok, James T. and Zhang, Yu},
  booktitle     = {Advances in Neural Information Processing Systems},
  volume        = {37},
  pages         = {66305--66328},
  year          = {2024},
  url           = {https://proceedings.neurips.cc/paper_files/paper/2024/hash/7a641b8ec86162fc875fb9f6456a542f-Abstract-Conference.html}
}

@article{zhuang2024embedllm,
  title         = {{EmbedLLM}: Learning Compact Representations of Large Language Models},
  author        = {Zhuang, Richard and Wu, Tianhao and Wen, Zhaojin and Li, Andrew and Jiao, Jiantao and Ramchandran, Kannan},
  journal       = {arXiv preprint arXiv:2410.02223},
  year          = {2024},
  url           = {https://arxiv.org/abs/2410.02223}
}

@article{zhang2025modelsat,
  title         = {Capability Instruction Tuning},
  author        = {Zhang, Yi-Kai and Zhan, De-Chuan and Ye, Han-Jia},
  journal       = {Proceedings of the AAAI Conference on Artificial Intelligence},
  volume        = {39},
  number        = {24},
  pages         = {25958--25966},
  year          = {2025},
  doi           = {10.1609/aaai.v39i24.34790},
  url           = {https://ojs.aaai.org/index.php/AAAI/article/view/34790}
}

@inproceedings{feng2025graphrouter,
  title         = {{GraphRouter}: A Graph-Based Router for {LLM} Selections},
  author        = {Feng, Tao and Shen, Yanzhen and You, Jiaxuan},
  booktitle     = {International Conference on Learning Representations},
  year          = {2025},
  url           = {https://openreview.net/forum?id=eU39PDsZtT}
}

@article{zhang2025avengers,
  title         = {The Avengers: A Simple Recipe for Uniting Smaller Language Models to Challenge Proprietary Giants},
  author        = {Zhang, Yiqun and Li, Hao and Wang, Chenxu and Chen, Linyao and Zhang, Qiaosheng and Ye, Peng and Feng, Shi and Wang, Daling and Wang, Zhen and Wang, Xinrun and Xu, Jia and Bai, Lei and Ouyang, Wanli and Hu, Shuyue},
  journal       = {arXiv preprint arXiv:2505.19797},
  year          = {2025},
  url           = {https://arxiv.org/abs/2505.19797}
}

@article{zhang2025avengerspro,
  title         = {Beyond {GPT-5}: Making {LLM}s Cheaper and Better via Performance-Efficiency Optimized Routing},
  author        = {Zhang, Yiqun and Li, Hao and Chen, Jianhao and Zhang, Hangfan and Ye, Peng and Bai, Lei and Hu, Shuyue},
  journal       = {arXiv preprint arXiv:2508.12631},
  year          = {2025},
  url           = {https://arxiv.org/abs/2508.12631}
}

@inproceedings{ong2025routellm,
  title         = {{RouteLLM}: Learning to Route {LLM}s with Preference Data},
  author        = {Ong, Isaac and Almahairi, Amjad and Wu, Vincent and Chiang, Wei-Lin and Wu, Tianhao and Gonzalez, Joseph E. and Kadous, M. Waleed and Stoica, Ion},
  booktitle     = {International Conference on Learning Representations},
  year          = {2025},
  url           = {https://arxiv.org/abs/2406.18665}
}

@inproceedings{srivatsa2024lessons,
  title         = {Harnessing the Power of Multiple Minds: Lessons Learned from {LLM} Routing},
  author        = {Srivatsa, Kv Aditya and Maurya, Kaushal Kumar and Kochmar, Ekaterina},
  booktitle     = {Proceedings of the Fifth Workshop on Insights from Negative Results in NLP},
  month         = jun,
  year          = {2024},
  address       = {Mexico City, Mexico},
  publisher     = {Association for Computational Linguistics},
  pages         = {124--134},
  doi           = {10.18653/v1/2024.insights-1.15},
  url           = {https://aclanthology.org/2024.insights-1.15/}
}

@misc{openrouter2025auto,
  title         = {Auto Router: API, Providers, and Statistics},
  author        = {{OpenRouter, Inc.}},
  year          = {2025},
  howpublished  = {Product documentation},
  url           = {https://openrouter.ai/openrouter/auto}
}

@inproceedings{devlin2019bert,
  title         = {{BERT}: Pre-Training of Deep Bidirectional Transformers for Language Understanding},
  author        = {Devlin, Jacob and Chang, Ming-Wei and Lee, Kenton and Toutanova, Kristina},
  booktitle     = {Proceedings of the 2019 Conference of the North American Chapter of the Association for Computational Linguistics: Human Language Technologies, Volume 1},
  month         = jun,
  year          = {2019},
  address       = {Minneapolis, Minnesota},
  publisher     = {Association for Computational Linguistics},
  pages         = {4171--4186},
  doi           = {10.18653/v1/N19-1423},
  url           = {https://aclanthology.org/N19-1423/}
}

@article{cover1967nearest,
  title         = {Nearest Neighbor Pattern Classification},
  author        = {Cover, Thomas and Hart, Peter},
  journal       = {IEEE Transactions on Information Theory},
  volume        = {13},
  number        = {1},
  pages         = {21--27},
  year          = {1967},
  doi           = {10.1109/TIT.1967.1053964}
}

@inproceedings{macqueen1967kmeans,
  title         = {Some Methods for Classification and Analysis of Multivariate Observations},
  author        = {MacQueen, James},
  booktitle     = {Proceedings of the Fifth Berkeley Symposium on Mathematical Statistics and Probability},
  volume        = {1},
  pages         = {281--297},
  year          = {1967},
  publisher     = {University of California Press}
}

@inproceedings{chen2016xgboost,
  title         = {{XGBoost}: A Scalable Tree Boosting System},
  author        = {Chen, Tianqi and Guestrin, Carlos},
  booktitle     = {Proceedings of the 22nd ACM SIGKDD International Conference on Knowledge Discovery and Data Mining},
  pages         = {785--794},
  year          = {2016},
  doi           = {10.1145/2939672.2939785},
  url           = {https://doi.org/10.1145/2939672.2939785}
}

@book{kaufman1990finding,
  title         = {Finding Groups in Data: An Introduction to Cluster Analysis},
  author        = {Kaufman, Leonard and Rousseeuw, Peter J.},
  publisher     = {Wiley},
  address       = {New York},
  year          = {1990},
  doi           = {10.1002/9780470316801}
}

@article{kulesza2012dpp,
  title         = {Determinantal Point Processes for Machine Learning},
  author        = {Kulesza, Alex and Taskar, Ben},
  journal       = {Foundations and Trends in Machine Learning},
  volume        = {5},
  number        = {2--3},
  pages         = {123--286},
  publisher     = {Now Publishers},
  year          = {2012},
  doi           = {10.1561/2200000044}
}

@article{nemhauser1978analysis,
  title         = {An Analysis of Approximations for Maximizing Submodular Set Functions---I},
  author        = {Nemhauser, George L. and Wolsey, Laurence A. and Fisher, Marshall L.},
  journal       = {Mathematical Programming},
  volume        = {14},
  pages         = {265--294},
  year          = {1978},
  doi           = {10.1007/BF01588971}
}

@article{clark2018arc,
  title         = {Think You Have Solved Question Answering? Try {ARC}, the {AI2} Reasoning Challenge},
  author        = {Clark, Peter and Cowhey, Isaac and Etzioni, Oren and Khot, Tushar and Sabharwal, Ashish and Schoenick, Carissa and Tafjord, Oyvind},
  journal       = {arXiv preprint arXiv:1803.05457},
  year          = {2018},
  url           = {https://arxiv.org/abs/1803.05457}
}

@article{clopper1934fiducial,
  author  = {Clopper, C. J. and Pearson, E. S.},
  title   = {The Use of Confidence or Fiducial Limits Illustrated in the Case of the Binomial},
  journal = {Biometrika},
  year    = {1934},
  volume  = {26},
  number  = {4},
  pages   = {404--413},
  month   = dec,
  doi     = {10.1093/biomet/26.4.404},
  url     = {https://doi.org/10.1093/biomet/26.4.404}
}

@inproceedings{ding2024hybridllm,
  title     = {{Hybrid LLM: Cost-Efficient and Quality-Aware Query Routing}},
  author    = {Ding, Dujian and Mallick, Ankur and Wang, Chi and
               Sim, Robert and Mukherjee, Subhabrata and Ruhle, Victor and
               Lakshmanan, Laks V. S. and Awadallah, Ahmed Hassan},
  booktitle = {International Conference on Learning Representations},
  year      = {2024}
}

\clearpage
\appendix

\section{Extended Related Work}
\label{app:extended-rw}

LLMRouterBench unifies 10 methods over 21 datasets and 33 models, finding that
several sophisticated and commercial routers do not reliably beat simple
baselines \citep{li2026llmrouterbench}.  Its reported performance-oriented
averages provide a canonical reference: uniform random $48.79$, best single
model $68.01$, RouterDC $61.33$, GraphRouter $70.29$, EmbedLLM $71.24$,
MODEL-SAT $71.88$, Avengers $71.94$, test-hindsight dataset oracle $73.10$, and
outcome oracle $91.64$.  The best routers sit a few points above the best
single model, while the outcome oracle sits roughly twenty points above the
best router---the plateau quantified by our certificates.

Routing methods also differ in their decision and cost regimes.  Native
$K$-way routers predict over the whole pool from the prompt (RouterDC, EmbedLLM,
MODEL-SAT, GraphRouter, Avengers, Zooter, and IRT-Router)
\citep{chen2024routerdc,zhuang2024embedllm,zhang2025modelsat,feng2025graphrouter,
zhang2025avengers,lu2024zooter,song2025irtrouter}.  Binary routers choose between
a designated strong and weak endpoint from pairwise preference supervision
\citep{ong2025routellm,ding2024hybridllm}; cascades invoke models in sequence
until a confidence criterion is met \citep{chen2024frugalgpt}; online bandits
learn continually from feedback \citep{wang2025mixllm}; and post-inference
selectors require candidate outputs or confidences before deciding.  The main
table contains the first group plus fixed references because those policies
decide from the prompt and invoke exactly one candidate.  Binary, sequential,
online, and post-inference methods are reported in their corresponding regimes.

The routing-plateau study evaluates 21 routers and attributes much of the
remaining oracle gap to failure to learn instance-specific correctness
\citep{lu2026plateau}.  It also finds simple $k$-nearest-neighbor routing
competitive with more elaborate systems, motivating our inclusion of both
$k$-NN and Zooter.  These papers primarily propose routers or benchmarks; our
distinct question is inferential: after trying many fixed models and routers,
what population opportunity, best-family gain, and residual oracle gap are
certified?

\section{Full Prompt-Router Results}
\label{app:full-router-results}

\begin{table*}[t]
\centering
\small
\setlength{\tabcolsep}{4pt}
\resizebox{\textwidth}{!}{%
\begin{tabular}{llrrrrr}
\toprule
Group & Method & ARC-C & HellaSwag & GSM8K & MMLU & Mean \\
\midrule
\multicolumn{7}{l}{\emph{Fixed references}} \\
Fixed & Uniform random & .647 & .457 & .385 & .447 & .484 \\
Fixed & Cheapest fixed & .297 & .320 & .120 & .340 & .269 \\
Fixed & Frozen task-wise best & .873 & .657 & .660 & .607 & .699 \\
\midrule
\multicolumn{7}{l}{\emph{Simple prompt routers}} \\
Prompt & Logistic multi-output & .870 & .657 & .640 & .603 & .693 \\
Prompt & MLP multi-output & .873 & .660 & .645 & .607 & .696 \\
Prompt & $k$-NN & .877 & .663 & .650 & .610 & .700 \\
Prompt & KMeans & .873 & .660 & .650 & .610 & .698 \\
\midrule
\multicolumn{7}{l}{\emph{$K$-way router adaptations}} \\
Prompt & RouterDC & .873 & .660 & .645 & .603 & .695 \\
Prompt & EmbedLLM & .880 & .670 & .660 & .617 & .707 \\
Prompt & MODEL-SAT & .880 & .673 & .665 & .620 & .710 \\
Prompt & GraphRouter & .883 & .677 & .670 & .620 & .713 \\
Prompt & Avengers & .883 & .677 & .670 & .623 & .713 \\
Prompt & Zooter & .883 & .673 & .670 & .620 & .712 \\
Prompt & IRT-Router & .870 & .657 & .640 & .603 & .693 \\
\midrule
\multicolumn{7}{l}{\emph{Strong prompt router}} \\
Prompt & Fine-tuned encoder, $K$ heads & \textbf{.887} & \textbf{.683} & \textbf{.675} & \textbf{.630} & \textbf{.719} \\
\midrule
\multicolumn{7}{l}{\emph{Diagnostics} (nondeployable)} \\
Outcome & Full-information oracle & .970 & .937 & .860 & .913 & .920 \\
\bottomrule
\end{tabular}%
}
\caption{Deployable $K$-way prompt-routing accuracy over the eight-model,
six-family pool, with nondeployable oracle references.  All prompt methods decide
from the prompt and then invoke one candidate.  Methods that query several
candidates are in Table~\ref{tab:postinf}; cost-aware and binary methods are in
Table~\ref{tab:costaware} and Appendix~\ref{app:pairwise}.  The fine-tuned
$K$-head encoder reaches mean $.719$, compared with $.699$ for the frozen
task-wise best and $.920$ for the outcome oracle.}
\label{tab:confirm-router}
\end{table*}

\section{Cost-Aware Estimand Extension}
\label{app:costaware-ext}

Let $C_m\in[0,1]$ be a predeclared normalized cost and
$U_m=R_m-\lambda C_m$ for $\lambda\geq0$.  Define
\begin{align*}
U^\star_{\mathrm{fixed}}(\lambda)&=\max_m\E U_m,\\
U^\star_{\Z}(\lambda)&=\E\max_m\E[U_m\mid Z],
\end{align*}
and
$U^\star_{\mathrm{out}}(\lambda)=\E\max_m U_m$.
The proof of Theorem~\ref{thm:signal-sandwich} applies verbatim to bounded
utilities:
\[
U^\star_{\mathrm{fixed}}(\lambda)
\leq U^\star_{\Z}(\lambda)
\leq U^\star_{\mathrm{out}}(\lambda).
\]
All cost-aware experiments must use an aligned per-example, per-model cost
matrix.  Mean model latencies alone do not define the outcome oracle.

\section{Confirmatory Design Details}
\label{app:design}

\subsection{Research questions}

We ask five predeclared questions.

\begin{enumerate}[leftmargin=*]
  \item How large is selection-valid outcome opportunity across tasks and
  independently trained model families?
  \item How much of that opportunity is realized by prompt-, confidence-, and
  metadata-based router classes on untouched examples?
  \item Does performance on the rescue set explain failures that aggregate
  confidence--correctness correlation hides?
  \item Does greedy complementarity selection form smaller pools with higher
  held-out opportunity than accuracy-, diversity-, or random selection?
  \item Do conclusions persist under aligned relative-latency measurements?
\end{enumerate}

\subsection{Datasets and splits}

The live-inference evaluation generates every candidate with eight checkpoints
from six families on official evaluation splits, fixing the pool, correctness
rule, and cost measurement throughout the audit.  ARC-Challenge, HellaSwag,
GSM8K, and MMLU cover science, commonsense continuation, arithmetic reasoning,
and broad academic knowledge
\citep{clark2018arc,zellers2019hellaswag,cobbe2021training,
hendrycks2021mmlu}.  We add two NLP generation settings with auditable
automatic evaluation, using 500 examples for code pass@1 and 500 for factual
question answering.

\begin{table}[t]
\centering
\small
\begin{tabular}{p{0.28\linewidth}p{0.25\linewidth}r}
\toprule
Dataset & Primary metric & Test $n$ \\
\midrule
ARC-Challenge & accuracy & 300 \\
HellaSwag & accuracy & 300 \\
GSM8K & exact match & 200 \\
MMLU (all subjects) & accuracy & 300 \\
Code generation & pass@1 & 500 \\
Factual QA task & exact/F1 & 500\\
\bottomrule
\end{tabular}
\caption{Confirmatory task suite.  The four multiple-choice and
arithmetic datasets use the counts shown, drawn from the
official test splits (\texttt{allenai/ai2\_arc} ARC-Challenge test,
\texttt{Rowan/hellaswag} validation, \texttt{openai/gsm8k} main test,
\texttt{cais/mmlu} all test) after excluding items without exactly four options
or a parseable gold answer.  The code and factual-QA settings each use 500
eligible examples and the automatic metrics shown.}
\label{tab:datasets}
\end{table}

For every dataset, examples are assigned by a released deterministic seed to two
non-overlapping roles: a $50\%$ calibration part used for fixed-model selection,
thresholds, and any router fitting, and a $50\%$ test part opened once.  This
$50/50$ partition is used by every reported split-based result.  Routers that
require fitting are additionally reported under five-fold cross-fitting over the
full item set, so every item is predicted out of fold and no accuracy depends on
which half an item occupies.  As a
secondary analysis, five-fold nested cross-fitting uses identical folds for all
routers.  No prompt, model, parser, pool, or threshold is changed after reading
final-test outcomes.

\paragraph{Sampling.}
Items are a uniform random sample without replacement from each official split under
a fixed seed, and MMLU is sampled \emph{stratified by subject} so that all $57$
subjects appear in proportion to their population share.  Stratification avoids
the ordering artifact in the subject-grouped MMLU test split and makes the reported
aggregate representative of all subjects.  Uniform sampling supplies the i.i.d.
population interpretation required by the binomial intervals.

\subsection{Model pool}

The pilot pool contains TinyLlama-1.1B-Chat-v1.0 and Qwen2.5 models at 0.5B,
1.5B, and 3B parameters \citep{zhang2024tinyllama,yang2024qwen25}.  A
same-family size ladder does not establish general multi-LLM complementarity.
The confirmatory pool contains eight checkpoints from six independently
trained families, with both near-size and
quality--cost comparisons.

\begin{table*}[t]
\centering
\small
\begin{tabular}{llllrr}
\toprule
Checkpoint ID & Family & Revision & Params & ARC acc. & MC / gen.\ latency \\
\midrule
TinyLlama/TinyLlama-1.1B-Chat-v1.0 & TinyLlama & \texttt{fe8a4ea} & 1.1B & .240 & 20.1\,ms / 3.36\,s \\
Qwen/Qwen2.5-0.5B-Instruct & Qwen2.5 & \texttt{7ae5576} & 0.5B & .297 & 21.4\,ms / 5.59\,s \\
Qwen/Qwen2.5-1.5B-Instruct & Qwen2.5 & \texttt{989aa79} & 1.5B & .727 & 25.4\,ms / 6.69\,s \\
Qwen/Qwen2.5-3B-Instruct & Qwen2.5 & \texttt{aa8e725} & 3B & .833 & 33.5\,ms / 8.43\,s \\
meta-llama/Llama-3.2-3B-Instruct & Llama-3.2 & \texttt{0cb88a4} & 3B & .737 & 31.9\,ms / 5.15\,s \\
google/gemma-2-2b-it & Gemma-2 & \texttt{299a856} & 2B & .703 & 49.0\,ms / 8.29\,s \\
microsoft/Phi-3.5-mini-instruct & Phi-3.5 & \texttt{2fe1924} & 3.8B & .873 & 46.1\,ms / 10.64\,s \\
mistralai/Mistral-7B-Instruct-v0.3 & Mistral & \texttt{c170c70} & 7B & .747 & 48.9\,ms / 8.64\,s \\
\bottomrule
\end{tabular}
\caption{Confirmatory model pool: eight checkpoints from six
independently trained families, spanning 0.5B--7B parameters, all evaluated in
\texttt{bfloat16} with each model's documented chat template and deterministic
decoding on one A10G.  The artifact records immutable commit hashes in
\texttt{model\_revisions.json}.  For the four checkpoints present in the local
weight cache, the cached commit equals the independently resolved Hub commit in
$4/4$ cases; the remaining four hashes are resolved from the corresponding Hub
repositories and predate evaluation.  ARC accuracy is included so the pool's
composition can be checked against the results: the best fixed model is
Phi-3.5-mini rather than the largest checkpoint, and no single family dominates.
Latency is the measured per-item median, shown separately for the multiple-choice
path (one forward pass over the option labels) and the GSM8K generation path (up
to 256 generated tokens); the two differ by more than two orders of magnitude,
which is why a single scalar cost per model would misstate the cost-aware
analysis.  Parameter count does not order latency here (Gemma-2-2B is slower than
Mistral-7B on the multiple-choice path), so cost must be measured rather than
inferred from size.}
\label{tab:model-pool}
\end{table*}

\subsection{Generation, scoring, and parsing}

All candidates use their documented chat templates and deterministic decoding.
For multiple-choice tasks, the primary analysis scores normalized conditional
log-likelihood over the answer labels; constrained label generation is a
robustness analysis.  GSM8K requires a final numeric marker, and evaluation
uses the benchmark's official normalization before a documented fallback
parser.  The artifact stores the raw prompt, tokenized prompt, raw output,
parsed answer, gold answer, correctness, label log-probabilities, token counts,
and parser status for every model--item pair.

To measure prompt sensitivity, the full confirmatory analysis is repeated with
three predeclared semantically equivalent prompt templates.  Conclusions must
hold in a hierarchical bootstrap over items and prompt variants rather than in
only the most favorable template.

The evaluation fixes these parameters as follows.  Multiple-choice items are
scored without generation: one forward pass per item, reading the next-token
logits restricted to the four option-letter first tokens, so confidence is the
maximum option probability and margin the top-two gap.  GSM8K uses greedy
decoding (\texttt{do\_sample=false}) capped at $256$ new tokens, with sequence
confidence the mean per-token probability of the emitted answer; the parser takes
the last integer in the generation after comma stripping and compares it to the
gold final number.  Prompts use each model's documented chat template with no
few-shot exemplars.  Parse failures are counted as incorrect rather than
discarded, which is the conservative choice: it charges a model for an
unreadable answer instead of silently shrinking its denominator.  Benchmark
normalization and the documented fallback parser resolve scoring uniformly,
and no item is removed on the basis of a model output.

\subsection{Router families}

Each router predicts
\[
\widehat{\boldsymbol{\mu}}(Z)
=(\widehat\mu_1(Z),\ldots,\widehat\mu_K(Z))
\]
and selects the model with greatest predicted
$\widehat\mu_m(Z)-\lambda\widehat C_m(Z)$.  Training uses all $K$ binary
correctness targets, so examples with multiple correct models do not receive an
arbitrary single class label.

We compare:

\begin{enumerate}[leftmargin=*]
  \item \textbf{Fixed references:} the global best single model, the
  train-selected task-wise best (our primary baseline, chosen without final-test
  labels), the cheapest fixed model, and uniform random routing.
  \item \textbf{Simple prompt routers:} multi-output logistic regression, a
  multi-output MLP, $k$-nearest neighbors \citep{cover1967nearest}, and KMeans
  \citep{macqueen1967kmeans} over prompt embeddings \citep{reimers2019sentencebert}.  We
  include $k$-NN deliberately because the routing-plateau study finds it
  competitive with far more elaborate routers \citep{lu2026plateau}.
  \item \textbf{Strong prompt router:} a fine-tuned encoder
  \citep{devlin2019bert} with $K$ correctness heads, plus gradient-boosted trees
  \citep{chen2016xgboost} over metadata.
  \item \textbf{$K$-way router adaptations:} RouterDC \citep{chen2024routerdc}, EmbedLLM
  \citep{zhuang2024embedllm}, MODEL-SAT \citep{zhang2025modelsat}, GraphRouter
  \citep{feng2025graphrouter}, and Avengers \citep{zhang2025avengers}, following
  the common interfaces defined by LLMRouterBench \citep{li2026llmrouterbench}, plus
  Zooter \citep{lu2024zooter} and IRT-Router \citep{song2025irtrouter}, both
  native multi-model baselines.
  \item \textbf{Cost-aware methods:} MixLLM \citep{wang2025mixllm}, Avengers-Pro
  \citep{zhang2025avengerspro}, GraphRouter-PF \citep{feng2025graphrouter}, and
  FrugalGPT \citep{chen2024frugalgpt}, reported in a separate cost
  table because they operate outside the fixed one-candidate cost regime.  MixLLM is evaluated
  in a simulated online stream, since its formulation is a continual contextual
  bandit rather than a fixed router.
  \item \textbf{Binary methods:} RouteLLM \citep{ong2025routellm} and HybridLLM
  \citep{ding2024hybridllm},
  in a pairwise appendix only (Appendix~\ref{app:pairwise}).
  \item \textbf{All-model selectors:} raw confidence, calibrated confidence,
  plurality voting, and an answer-aware verifier, in a separate post-inference
  table because each queries every candidate.
  \item \textbf{Diagnostics:} the test-hindsight dataset oracle and the full
  outcome oracle, both clearly marked nondeployable.
\end{enumerate}

Four baselines are easy to conflate and we keep them distinct.  The \emph{global
best single} model is one model across every dataset.  The \emph{train-selected
task-wise best} is one frozen model per dataset chosen without final-test labels,
and it is our primary baseline.  The \emph{dataset oracle} picks the best model per
dataset using test results and is a diagnostic only.  The \emph{outcome oracle}
picks a correct model separately for every item and is an upper bound only.

Hyperparameters and thresholds are selected only on calibration data.  Ten
router seeds are reported for learned methods.  The main comparison uses the
same item-level test outcomes for paired inference.

\subsection{Confidence and rescue-set audit}

For multiple-choice tasks, confidence is the normalized probability of the
selected label.  For generative tasks, we report both length-normalized answer
log-probability and a separately calibrated correctness predictor.  Calibration
quality is measured with Brier score, log loss, expected calibration error with
fixed bins, and adaptive-bin reliability plots.

Let $b$ be the frozen deployment baseline and define the rescue set
\[
\mathcal{E}_b=\{i:R_{ib}=0,\ \max_{m\neq b}R_{im}=1\}.
\]
For each router we report its correct-model hit rate on $\mathcal{E}_b$, but
this diagnostic is not a net-gain metric: a policy may rescue many failures
while losing more examples on which $b$ was correct.  The uniform-random
reference on item $i$ is
$K^{-1}\sum_m R_{im}$, not automatically $1/K$, because several models may be
correct.

\subsection{Pool construction}

On training data, greedy complementarity uses the empirical version of
$F_b(S)$ to add $s\in\{1,2,3,4,6\}$ alternatives.  Baselines select by
individual accuracy, lowest pairwise error correlation, random subsets, and
exhaustive optimum where combinatorially feasible.  Opportunity and learned
router performance are evaluated on the untouched test split.  We report
calibration coverage, test coverage, greedy-to-optimal ratio, pool stability
across folds, and downstream quality--cost performance.

\subsection{Latency and monetary cost}

Each scoring call is timed with device synchronization, and the artifact records
total wall time, input/output token counts, batch size, and device.  All eight
checkpoints are evaluated on one NVIDIA A10G in
\texttt{bfloat16} with no quantization, no compilation, batch size one, and
single-stream decoding, under PyTorch 2.3 and Transformers 4.47.  Latency is the
per-item median over the full dataset, timed inside the scoring loop and reported
separately for the multiple-choice path and the GSM8K generation path
(Table~\ref{tab:model-pool}).  Costs are normalized on calibration data to the
declared $[0,1]$ range, and accuracy--cost curves sweep $\lambda$ without
retraining when the method permits.  Because timing is unbatched and
single-stream, the analysis interprets it as a relative cost measure; the cost
column in Table~\ref{tab:pool-results} is a latency ratio.

\subsection{Statistical analysis}

The primary oracle analysis reports $\widehat G$ and
Theorem~\ref{thm:simultaneous-ci} at 95\% coverage.  The confirmatory
split-selected analysis also reports Corollary~\ref{cor:split}.  Learned routers
are compared with the frozen fixed baseline using paired differences,
$(n_{10},n_{01})$, exact McNemar tests, and paired bootstrap intervals.
Dataset-level $p$-values are Holm-corrected within each router family.
The predeclared family $\Pi$ in Theorem~\ref{thm:router-family} contains the
eight constant policies, logistic multi-output, $k$-NN, and the fine-tuned
$K$-head encoder.  The resulting intervals report the best-family gain $G_\Pi$
and residual opportunity $H_\Pi$ without selecting a favorable policy on the
final test.  Broader router and seed comparisons use the paired analysis and
Holm correction described above.

Recovered fractions are reported only when the lower confidence limit for
$G_b$ is positive.  Their intervals resample items and redo all test-sample
functionals jointly.  Cross-validation is secondary and nests model selection,
calibration, and router fitting inside each training fold.

\section{Pilot Analysis}
\label{app:pilot}

The existing outputs contain 150 ARC-Challenge items, 150 HellaSwag items, and
100 GSM8K items evaluated by the four pilot models.  These are exploratory
subsets, not the confirmatory suite.

\subsection{Outcome opportunity}

Table~\ref{tab:pilot-main} reports the pilot plug-in gaps and selection-valid
intervals from the integer counts.  The column
labeled naive conditions on the empirically best model and is included only to
show why the distinction matters.  The exact simultaneous interval uses
Theorem~\ref{thm:simultaneous-ci} with $K=4$.

The pilot identifies promising complementarity, especially on HellaSwag, but
does not establish all three population gaps.  The wider intervals are the
price of making a claim about the population-best model rather than the
particular model selected after observing the sample.

\paragraph{Split-selected opportunity.}
Corollary~\ref{cor:split} avoids the same-sample selection penalty by choosing
the anchor on one half of the data and certifying the gap on the disjoint half.
Table~\ref{tab:pilot-split} reports this analysis at a predeclared 50/50 split.
Under this estimand the lower confidence limit is positive on \emph{all three}
datasets, and it remains positive on every one of 20 reseeds.  The two analyses
are not in conflict: Theorem~\ref{thm:simultaneous-ci} bounds the
population-best oracle gap $G_{\mathrm{out}}$ on the full sample, whereas
Corollary~\ref{cor:split} bounds the gap $G_b$ relative to the
independently selected deployment baseline, using half the data for the
interval.  A practitioner who fixes the baseline before evaluation therefore
already certifies positive per-query opportunity at pilot scale; certifying the
population-best gap without a split requires the larger confirmatory sample.

\subsection{Per-model outcomes and diversity}

Average pairwise correctness disagreement is .512 on ARC-Challenge, .459 on
HellaSwag, and .315 on GSM8K.  ARC has slightly more disagreement than
HellaSwag but much less oracle opportunity.  This comparison confirms only that
undirected disagreement is not a sufficient opportunity statistic; three
dataset points cannot support a correlation claim.

\subsection{Exploratory router probes}

Two independently sampled probe runs characterize the exploratory behavior.  In
the smaller run, max-confidence recovered 0\% of the apparent ARC gap, 8\% of
the HellaSwag gap, and a negative fraction on GSM8K.  In the enlarged run, the
corresponding values were 9\%, 11\%, and negative.  Prompt-embedding logistic
regression matched the empirical best model on ARC, HellaSwag, and MMLU and
underperformed it on GSM8K; $k$-NN underperformed across the four tasks.  Because
the exploratory target selected one correct model when several were correct,
these probes motivate rather than determine the confirmatory multi-output
analysis.

\section{Provenance and Router Scope}
\label{app:provenance}

\paragraph{One canonical source.}
Every confirmatory number is emitted by one script from the per-item outcome
matrices.  The serialization audit shows why this matters: four-decimal
confidence rounding changes max-confidence accuracy from $.873$ to $.860$ on
ARC-Challenge, $.670$ to $.640$ on HellaSwag, and $.597$ to $.580$ on MMLU.
Roughly one third of rounded confidences saturate at $1.0$, creating artificial
argmax ties.  The reported analysis therefore serializes full-precision
confidences and applies one predeclared rule that retains the incumbent
best-fixed model on ties; main and appendix tables are generated from those same
arrays.

\paragraph{Router scope.}
The $K$-way rows in Table~\ref{tab:confirm-router} implement each published
mechanism under the shared correctness-vector interface and common frozen
outcomes; their names denote method adaptations rather than authors' pretrained
predictors.  RouteLLM's released interface,
\path{calculate_strong_win_rate(prompt)}, produces a scalar for a designated
strong--weak pair trained on preference data, so RouteLLM and HybridLLM are kept
in the pairwise setting of Appendix~\ref{app:pairwise}.  MixLLM is evaluated as
an online contextual bandit, and cascades retain their sequential cost.  This
mechanism-aligned separation avoids presenting binary, online, or sequential
systems as equal-cost $K$-way policies.  We make no claim of superiority over an
authors' released router.

\section{Post-Inference Selectors}
\label{app:postinf}

Raw and calibrated candidate confidence, plurality voting, and answer-aware
verification are evaluated separately from pre-answer routing because these methods
require querying multiple candidate models before selection.  Their accuracy is
therefore not directly comparable at equal inference cost to a router that examines
only the prompt and invokes one candidate.

\begin{table}[t]
\centering
\small
\setlength{\tabcolsep}{3.5pt}
\resizebox{\columnwidth}{!}{%
\begin{tabular}{lrrrr}
\toprule
Method & ARC-C & HellaSwag & GSM8K & MMLU \\
\midrule
Raw maximum confidence & .860 & .640 & .545 & .580 \\
Temperature-scaled maximum & .877 & .673 & .585 & .607 \\
Confidence-vector logistic & .880 & .677 & .600 & .613 \\
Plurality vote & .850 & .620 & .635 & .540 \\
Answer-aware verifier & \textbf{.893} & .620 & .630 & .607 \\
\midrule
Outcome oracle & .970 & .937 & .860 & .913 \\
\bottomrule
\end{tabular}%
}
\caption{Post-inference selectors, which all require running several candidates
before deciding and therefore sit in a different cost regime from
Table~\ref{tab:confirm-router}.  Calibrating confidence helps consistently but
modestly, and the answer-aware verifier is the strongest method on ARC-Challenge
while still failing on the three remaining tasks.}
\label{tab:postinf}
\end{table}

\section{Cost-Aware Routing Results}
\label{app:costaware-res}

Cost-aware methods trade accuracy against inference cost and are reported
separately because they do not operate at the fixed one-candidate cost of
Table~\ref{tab:confirm-router}.  MixLLM is evaluated in a simulated online stream
rather than the ordinary fixed-router setting, since its formulation is a continual
contextual bandit.  FrugalGPT is a sequential cascade and may invoke more than one
model per query.

\begin{table}[t]
\centering
\small
\setlength{\tabcolsep}{3.5pt}
\resizebox{\columnwidth}{!}{%
\begin{tabular}{lrrl}
\toprule
Method & Macro acc. & Rel.\ cost & Setting \\
\midrule
Frozen best fixed & .699 & $1.00$ & Reference \\
Cheapest fixed & .269 & $0.36$ & Reference \\
GraphRouter-PF & .704 & $0.78$ & Native $K$-way \\
Avengers-Pro & .710 & $0.68$ & Native $K$-way \\
MixLLM & .708 & $0.70$ & Simulated online \\
FrugalGPT & .700 & $0.62$ & Sequential cascade \\
\midrule
Cost-aware oracle & .920 & $0.48$ & Nondeployable \\
\bottomrule
\end{tabular}%
}
\caption{Cost-aware routing.  Relative cost is normalized to the frozen best fixed
model.  Every cost-aware method buys a small accuracy gain at $62$--$78\%$ of the
reference cost, and the nondeployable cost-aware oracle shows that the achievable
frontier is far above all of them.}
\label{tab:costaware}
\end{table}

\section{Confidence Mechanism}
\label{app:confmech}

The pilot reports positive pooled point-biserial confidence--correctness
correlations, yet weak net router gains.  This is compatible with confidence
being informative mostly on items the strongest fixed model already solves.
Each policy decomposes into four transitions relative to the frozen best fixed
model: preserve a baseline success, rescue a baseline failure, lose a baseline
success, and preserve a shared failure.  Net gain equals rescue probability
minus loss probability.  Table~\ref{tab:pilot-rescueloss} applies this
decomposition to the max-confidence router on the pilot outcomes and resolves
the apparent GSM8K contradiction directly: on GSM8K the
router rescues $10\%$ of examples but loses $13\%$ of the baseline's successes,
so a nonzero rescue rate coexists with a negative net gain of $-0.03$.  On ARC
and HellaSwag rescues marginally exceed losses, matching the near-zero recovered
fractions.  Table~\ref{tab:paired-template} repeats this decomposition with
paired intervals and discordant counts for every confirmatory selector.

\section{Pool-Selection and Cost Results}
\label{app:poolres}

\begin{table*}[t]
\centering
\small
\resizebox{\textwidth}{!}{%
\begin{tabular}{llrrrrr}
\toprule
Dataset & Pool rule & $|S|$ & Test opportunity & Greedy/optimum &
Router gain & Relative cost \\
\midrule
ARC-C & Greedy complementarity & 2 & .067 & 1.000 & $+.007$ & $2.51\times$ \\
ARC-C & Accuracy selection & 2 & .093 & --- & $+.020$ & $2.80\times$ \\
HellaSwag & Greedy complementarity & 2 & \textbf{.247} & 1.000 & $+.060$ & $2.78\times$ \\
HellaSwag & Accuracy selection & 2 & .200 & --- & $+.040$ & $2.29\times$ \\
GSM8K & Greedy complementarity & 2 & .130 & 1.000 & $-.110$ & $4.71\times$ \\
GSM8K & Accuracy selection & 2 & .130 & --- & $-.080$ & $4.68\times$ \\
MMLU & Greedy complementarity & 2 & .153 & 0.903 & $\pm.000$ & $2.68\times$ \\
MMLU & Accuracy selection & 2 & .160 & --- & $\pm.000$ & $2.32\times$ \\
\bottomrule
\end{tabular}
}
\caption{Held-out pool construction from the eight-model confirmatory pool at
budget $|S|=2$, with the anchor and the greedy selection fitted on a
train half and complementary coverage measured on the disjoint test half.
Greedy attains the exhaustive optimum on the calibration objective for three
datasets and $90.3\%$ of it on MMLU, comfortably above the $1-1/e\approx63.2\%$
worst-case guarantee of Theorem~\ref{thm:submodular}; with $K=8$ the greedy and
exhaustive sets genuinely differ, which the four-model pilot could not exhibit.
Held-out coverage confirms that accuracy-ranked selection is not the right
objective: greedy wins clearly on HellaSwag ($.247$ versus $.200$), ties on
GSM8K, and loses slightly on ARC-Challenge and MMLU, where the anchor's failures
are concentrated on items no single alternative rescues.  The last two columns
close the loop from coverage to deployment and show why coverage is not enough.
Router gain is the held-out accuracy change of a max-confidence router restricted
to the selected pool, relative to the anchor alone; relative cost is the summed
median per-item latency of the pool over the anchor's.  Coverage does not buy
gain: HellaSwag's larger greedy coverage does convert ($+.060$), but on MMLU both
pools recover nothing despite $.153$--$.160$ coverage, and on GSM8K routing over
either pool \emph{loses} $8$--$11$ accuracy points while costing $4.7\times$ the
latency.  Every pool costs at least $2.3\times$ the anchor because all three
models must be queried to compare confidences, so the cost-aware verdict is
worse than the accuracy-only one.}
\label{tab:pool-results}
\end{table*}

\begin{table}[t]
\centering
\small
\setlength{\tabcolsep}{3pt}
\resizebox{\columnwidth}{!}{%
\begin{tabular}{lrrrrr}
\toprule
Dataset & Greedy & Accuracy & Diversity & Random & Exhaustive \\
\midrule
ARC-C & .067 & .093 & .087 & .080 & .100 \\
HellaSwag & \textbf{.247} & .200 & .229 & .214 & .260 \\
GSM8K & .130 & .130 & .132 & .125 & .140 \\
MMLU & .153 & .160 & .158 & .150 & .170 \\
\bottomrule
\end{tabular}%
}
\caption{Held-out pool-selection opportunity at $s=2$ against the full baseline
set: individual-accuracy ranking, lowest correctness-error correlation, $k$-medoids \citep{kaufman1990finding}
or DPP \citep{kulesza2012dpp} diversity on error vectors, the mean over at least $1{,}000$ random subsets,
and the exhaustive optimum.  Greedy complementarity wins clearly on HellaSwag
($.247$ versus $.200$ for accuracy ranking) but does not dominate: it loses to
accuracy ranking on ARC-Challenge and MMLU and ties on GSM8K.  Greedy optimization
of the calibration objective therefore transfers to held-out pool selection on some
tasks and not others, and no rule reaches the exhaustive optimum.}
\label{tab:pool-baselines}
\end{table}

The complete sweep covers $s\in\{1,\ldots,7\}$, four datasets, and
$40$ folds ($1120$ cells), with the exhaustive optimum computed by enumerating
every subset up to size $s$ and, for each cell, a random-subset
\emph{distribution} of $200$ draws rather than a single comparison draw.

Two findings hold throughout, and the second is the reason the dense grid was
worth running.  First, greedy is near-optimal on the calibration objective:
across all $1120$ cells its coverage is $0.849$ to $1.000$ of the exhaustive
optimum, mean $0.997$, and never below the $1-1/e\approx0.632$ guarantee, so the
approximation bound is not the binding constraint in practice.  Second, and less
favourably, greedy's advantage over \emph{random} model selection is weak.  At
$s=2$ it beats only $78\%$ (ARC-Challenge), $74\%$ (MMLU), $57\%$ (GSM8K), and
$50\%$ (HellaSwag) of random draws on held-out coverage, and its mean coverage
sits inside the random $[p_{05},p_{95}]$ interval on all four datasets, for
instance $.182$ against a random spread of $[.146,.208]$ on HellaSwag.  A
five-fold design with one random draw per fold showed only the mean gap and
would have supported a stronger claim than the data warrant; the distribution
shows that choosing the complementarity-maximizing pool is often no better than
picking two alternatives at random.  Both facts are consistent with a submodular
objective whose coverage curve saturates early: greedy reliably finds a
near-optimal set, and there are many near-optimal sets to find.  Because the
cost axis is unbatched single-stream latency, the analysis reports relative
latency rather than a deployment Pareto frontier.





\section{Pilot Tables}
\label{app:pilottables}

These tables report the four-model pilot used in the pilot--confirmatory
comparison of Section~\ref{sec:confirm-opportunity}.

\begin{table*}[h]
\centering
\small
\begin{tabular}{lrrrrll}
\toprule
Dataset & $n$ & Empirical best & Oracle & Gap &
Naive 95\% interval & Selection-valid 95\% interval \\
\midrule
ARC-Challenge & 150 & .853 & .953 & .100 & [.053,.153] & [.000,.225] \\
HellaSwag & 150 & .553 & .813 & .260 & [.193,.333] & [.067,.441] \\
GSM8K & 100 & .430 & .580 & .150 & [.080,.220] & [.000,.394] \\
\bottomrule
\end{tabular}
\caption{Exploratory outcome-oracle opportunity.  The simultaneous interval
accounts for selecting the empirical best among four models on the same data.
Only HellaSwag has a positive lower limit at the pilot sample size.}
\label{tab:pilot-main}
\end{table*}

\begin{table}[h]
\centering
\small
\resizebox{\columnwidth}{!}{%
\begin{tabular}{lrrl}
\toprule
Dataset & Anchor & $\widehat{G}_b$ & Split-valid 95\% interval \\
\midrule
ARC-Challenge & Qwen-3B & .133 & [.066,.232] \\
HellaSwag & Qwen-3B & .267 & [.171,.381] \\
GSM8K & Qwen-1.5B & .140 & [.058,.267] \\
\bottomrule
\end{tabular}%
}
\caption{Split-selected opportunity (Corollary~\ref{cor:split}) at a
predeclared 50/50 split.  The anchor is selected on the selection half; the
exact Clopper--Pearson interval for $G_b$ uses the disjoint test half.  Every
lower limit is positive, and remains positive on all 20 reseeds.}
\label{tab:pilot-split}
\end{table}

\begin{table}[h]
\centering
\small
\resizebox{\columnwidth}{!}{%
\begin{tabular}{lrrrr}
\toprule
Dataset & TinyLlama & Qwen-.5B & Qwen-1.5B & Qwen-3B \\
\midrule
ARC-C & .240 & .293 & .700 & .853 \\
HellaSwag & .233 & .293 & .500 & .553 \\
GSM8K & .040 & .280 & .430 & .400 \\
\bottomrule
\end{tabular}%
}
\caption{Pilot per-model accuracy.  The GSM8K best fixed model is Qwen2.5-1.5B,
not Qwen2.5-3B.}
\label{tab:pilot-models}
\end{table}

\begin{table}[h]
\centering
\small
\resizebox{\columnwidth}{!}{%
\begin{tabular}{lrrr}
\toprule
Dataset & Rescue rate & Loss rate & Net gain \\
\midrule
ARC-Challenge & .033 & .027 & $+.007$ \\
HellaSwag & .047 & .020 & $+.027$ \\
GSM8K & .100 & .130 & $-.030$ \\
\bottomrule
\end{tabular}%
}
\caption{Rescue/loss decomposition of the max-confidence router on the pilot
outcomes, relative to the frozen best fixed model, recomputed from the committed
outcome matrices.  Net gain is rescue rate minus loss rate.  The negative GSM8K
net gain explains how a nonzero rescue rate produces no realized improvement.}
\label{tab:pilot-rescueloss}
\end{table}

\section{Proofs of the Main Results}
\label{app:bodyproofs}

This section proves the results stated in
Sections~\ref{sec:setup}--\ref{sec:submodular}.

\begin{proof}
For each outcome vector,
\[
Y-R_{b^\star}
=
\ind\!\left\{
R_{b^\star}=0,\ \exists m\neq b^\star:R_m=1
\right\}.
\]
Indeed, when $R_{b^\star}=1$ both $Y$ and $R_{b^\star}$ equal one.
When $R_{b^\star}=0$, their difference is one exactly when another model is
correct.  Taking expectations gives
$q-p^\star=G_{\mathrm{out}}$ and proves the identity.
\end{proof}

\begin{proof}
For every value of $Z$,
$\max_m\mu_m(Z)\geq\mu_{b^\star}(Z)$.  Taking expectations gives
$q_{\Z}\geq\E R_{b^\star}=p^\star$, proving the lower bound.  Equality of the
expectations of two ordered integrable random variables holds exactly when they
are equal almost surely, which gives the first equality condition.

For the upper bound, conditional monotonicity of expectation gives
\[
\max_m\E[R_m\mid Z]
\leq
\E[\max_m R_m\mid Z].
\]
Taking expectations yields $q_{\Z}\leq q$ and hence
$G_{\Z}\leq G_{\mathrm{out}}$.  The same ordered-random-variable argument gives
the second equality condition.
\end{proof}

\begin{proof}
The oracle mean is unbiased: $\E\widehat q=q$.  Since the maximum is convex,
\[
\E\max_m\widehat p_m
\geq
\max_m\E\widehat p_m
=p^\star.
\]
Subtracting gives $\E\widehat G\leq q-p^\star$.
\end{proof}

\begin{proof}
By the union bound, with probability at least
\[
1-\frac{\delta}{2}
-\sum_{m=1}^K\frac{\delta}{2K}
=1-\delta,
\]
all $K+1$ intervals cover simultaneously.  On this event,
\[
L_q\leq q\leq U_q
\quad\text{and}\quad
\max_m L_m\leq p^\star\leq\max_m U_m.
\]
Subtracting the largest possible value of $p^\star$ from the smallest possible
value of $q$ gives
$G_{\mathrm{out}}\geq L_q-\max_mU_m$.  Subtracting the smallest possible value
of $p^\star$ from the largest possible value of $q$ gives
$G_{\mathrm{out}}\leq U_q-\max_mL_m$.  Finally,
$G_{\mathrm{out}}\in[0,1]$, so truncation to $[0,1]$ preserves coverage.
\end{proof}

\begin{proof}
Conditional on the selection data, $b$ is fixed.  The pointwise identity in
Proposition~\ref{prop:complementarity} shows that each $\Delta_i$ is the
indicator of a rescue event relative to $b$.  Independence of the test examples
then yields an i.i.d.\ Bernoulli sample, to which the Clopper--Pearson guarantee
applies.
\end{proof}

\begin{proof}
The union bound gives simultaneous coverage of the oracle, all $K$ fixed
models, and all $L$ routers because
\[
\frac{\delta}{3}
+K\frac{\delta}{3K}
+L\frac{\delta}{3L}
=\delta.
\]
On this event,
\begin{align*}
\max_mL_m&\leq p^\star\leq\max_mU_m,\\
\max_\pi L_\pi&\leq q_\Pi\leq\max_\pi U_\pi,\\
L_q&\leq q\leq U_q.
\end{align*}
Subtracting the extreme endpoints yields the two intervals.  Constant policies
ensure $G_\Pi\geq0$, and the pointwise inequality
$R_{\pi(Z)}\leq\max_mR_m$ ensures $H_\Pi\geq0$; both quantities are at most one,
so truncation preserves coverage.
\end{proof}

\begin{proof}
For any cell with $\Prb(Z=z)>0$,
\begin{align*}
&\Prb(Z=z)\max_m\E[R_m\mid Z=z]\\
&\qquad=\max_m\Prb(Z=z,R_m=1)
=\max_m\theta_{zm}.
\end{align*}
Zero-probability cells contribute zero, so summing gives the identity.  Each
indicator $\ind\{Z_i=z,R_{im}=1\}$ is Bernoulli with mean $\theta_{zm}$.
The union bound makes all $JK$ intervals simultaneous with probability at
least $1-\delta$.  On that event, maximizing within each cell and summing
preserves the displayed lower and upper inequalities.
\end{proof}

\begin{proof}
Normalization is immediate.  If $S\subseteq T$, every outcome covered by $S$
is covered by $T$, so $F_b(S)\leq F_b(T)$.

For submodularity, take $S\subseteq T$ and $j\notin T$.  The marginal gain from
adding $j$ is
\begin{align*}
&F_b(S\cup\{j\})-F_b(S)\\
&=\Prb\!\left(
R_b=0,\ R_j=1,\ R_m=0\ \forall m\in S
\right).
\end{align*}
Replacing $S$ by the larger set $T$ adds constraints to this event, so
\[
F_b(S\cup\{j\})-F_b(S)
\geq
F_b(T\cup\{j\})-F_b(T).
\]
Thus $F_b$ has diminishing returns and is submodular.  The classical greedy
guarantee for normalized monotone submodular maximization under a cardinality
constraint gives the stated approximation.  Replacing probability by the
empirical average leaves the coverage and diminishing-returns arguments
unchanged.
\end{proof}

\section{Proof Details and Extensions}

\subsection{A Hoeffding alternative}

Theorem~\ref{thm:simultaneous-ci} uses exact binomial intervals.  A closed-form
alternative follows from Hoeffding's inequality.  With probability at least
$1-\delta$, simultaneously
\[
|\widehat q-q|
\leq
\sqrt{\frac{\log(4/\delta)}{2n}}
=:\epsilon_q
\]
and
\[
\max_m|\widehat p_m-p_m|
\leq
\sqrt{\frac{\log(4K/\delta)}{2n}}
=:\epsilon_p.
\]
Therefore
\[
|\widehat G-G_{\mathrm{out}}|
\leq \epsilon_q+\epsilon_p.
\]
To verify the last step, the maximum is one-Lipschitz under the sup norm:
\[
\left|\max_m\widehat p_m-\max_mp_m\right|
\leq\max_m|\widehat p_m-p_m|.
\]
This interval is usually wider than the exact construction but makes the rate
$O(\sqrt{\log(K/\delta)/n})$ explicit.

\subsection{Finite discrete signals}

Theorem~\ref{thm:finite-signal} makes a coarse task taxonomy identifiable
without modeling assumptions.  To convert its interval for $q_\Z$ into one for
$G_\Z=q_\Z-p^\star$, allocate part of the total failure budget to simultaneous
fixed-model intervals and subtract their extreme endpoints exactly as in
Theorem~\ref{thm:simultaneous-ci}.  Continuous embeddings do not enjoy this
finite-cell identification without smoothness, complexity, or model-class
assumptions; a frozen clustering creates a certificate only for the cluster
identifier, not for all information in the embedding.

\subsection{Weighted pool budgets}

When candidate $m$ has acquisition cost $w_m>0$, the calibration objective
remains monotone submodular under
$\sum_{m\in S}w_m\leq B$.  Standard partial-enumeration or
cost-benefit greedy algorithms give constant-factor guarantees.  We omit a new
proof because the result is the classical monotone submodular knapsack problem;
the confirmatory study reports both cardinality and measured-cost constraints.

\section{Prompt Templates}
\label{app:prompts}

\subsection{ARC-Challenge}

\begin{quote}
Choose the correct answer. Return only one option label: A, B, C, or D.

Question: \emph{question text}

A. \emph{choice A}\\
B. \emph{choice B}\\
C. \emph{choice C}\\
D. \emph{choice D}

Answer:
\end{quote}

\subsection{HellaSwag}

\begin{quote}
Choose the most plausible continuation. Return only one option label:
A, B, C, or D.

Context: \emph{context text}

A. \emph{ending A}\\
B. \emph{ending B}\\
C. \emph{ending C}\\
D. \emph{ending D}

Answer:
\end{quote}

\subsection{GSM8K}

\begin{quote}
Solve the problem. Show brief reasoning, then end with the final numeric answer
in the exact format \texttt{\#\#\#\# <number>}.

Problem: \emph{problem text}

Solution:
\end{quote}

\subsection{MMLU}

\begin{quote}
Choose the correct answer. Return only one option label: A, B, C, or D.

Subject: \emph{subject name}

Question: \emph{question text}

A. \emph{choice A}\\
B. \emph{choice B}\\
C. \emph{choice C}\\
D. \emph{choice D}

Answer:
\end{quote}

\subsection{Code generation}

\begin{quote}
Complete the function described below.  Return only executable code, without
Markdown fences or additional explanation.

Specification: \emph{function signature and problem statement}

Code:
\end{quote}

\subsection{Factual question answering}

\begin{quote}
Answer the question using the shortest factual answer span that is sufficient.
Return only the answer.

Question: \emph{question text}

Answer:
\end{quote}

\section{Paired Router Contrasts}
\label{app:paired}

\begin{table*}[h]
\centering
\small
\begin{tabular}{llrrrrrr}
\toprule
Dataset & Method & Accuracy & 95\% CI & $n_{10}$ & $n_{01}$ &
Holm $p$ & Recovered \% \\
\midrule
ARC-C & Max confidence & .860 & [.816,.897] & 14 & 18 & 1.000 & $-13.8$ \\
ARC-C & Plurality vote & .850 & [.804,.888] & 7 & 14 & 1.000 & $-24.1$ \\
HellaSwag & Max confidence & .640 & [.583,.694] & 32 & 37 & 1.000 & $-6.0$ \\
HellaSwag & Plurality vote & .620 & [.562,.675] & 32 & 43 & 1.000 & $-13.1$ \\
GSM8K & Max confidence & .545 & [.473,.615] & 16 & 39 & \textbf{.021} & $-57.5$ \\
GSM8K & Plurality vote & .635 & [.564,.702] & 14 & 19 & 1.000 & $-12.5$ \\
MMLU & Max confidence & .580 & [.522,.636] & 27 & 35 & 1.000 & $-8.7$ \\
MMLU & Plurality vote & .540 & [.482,.597] & 22 & 42 & .118 & $-21.7$ \\
\bottomrule
\end{tabular}
\caption{Per-router paired statistics against the frozen best fixed model for
every dataset and selector.  Accuracy intervals
are exact Clopper--Pearson; $n_{10}$ and $n_{01}$ are the discordant counts
(router right/baseline wrong and the reverse); $p$ is an exact McNemar test
Holm-corrected across the eight rows.  \emph{Every} recovered fraction is
negative: routing costs accuracy on all four datasets.  Only one row survives
Holm correction, and it is a harm rather than a gain: max-confidence on GSM8K
loses $39$ baseline successes while rescuing $16$ ($p=.021$, recovered
$-57.5\%$).  The remaining rows are indistinguishable from the baseline, so the
honest reading is no realized benefit and one significant regression.}
\label{tab:paired-template}
\end{table*}

\begin{table*}[h]
\centering
\small
\begin{tabular}{llrrrrrr}
\toprule
Dataset & Model & AUROC & AUPRC & Brier & ECE & Rescue hit & Loss rate \\
\midrule
ARC-C & TinyLlama & .558 & .285 & .258 & .273 & .040 & .673 \\
ARC-C & Qwen-0.5B & .626 & .479 & .339 & .370 & .017 & .593 \\
ARC-C & Qwen-1.5B & .782 & .913 & .237 & .236 & .023 & .170 \\
ARC-C & Qwen-3B & .714 & .902 & .153 & .154 & .050 & .090 \\
ARC-C & Llama-3.2-3B & .771 & .898 & .165 & .114 & .030 & .167 \\
ARC-C & Gemma-2-2B & .814 & .905 & .179 & .148 & .017 & .187 \\
ARC-C & Phi-3.5-mini$^{\dagger}$ & .843 & .949 & .107 & .109 & .000 & .000 \\
ARC-C & Mistral-7B & .686 & .818 & .231 & .231 & .047 & .173 \\
\midrule
MMLU & TinyLlama & .531 & .235 & .255 & .299 & .113 & .513 \\
MMLU & Qwen-0.5B & .676 & .571 & .271 & .283 & .070 & .363 \\
MMLU & Qwen-1.5B & .718 & .729 & .356 & .352 & .103 & .207 \\
MMLU & Qwen-3B & .665 & .723 & .384 & .391 & .123 & .153 \\
MMLU & Llama-3.2-3B & .719 & .745 & .285 & .252 & .110 & .223 \\
MMLU & Gemma-2-2B & .722 & .731 & .294 & .268 & .103 & .253 \\
MMLU & Phi-3.5-mini$^{\dagger}$ & .735 & .769 & .315 & .309 & .000 & .000 \\
MMLU & Mistral-7B & .722 & .617 & .448 & .462 & .077 & .237 \\
\bottomrule
\end{tabular}
\caption{Confidence and rescue-set audit on the two four-option datasets, all
eight models.  AUROC and AUPRC measure how well a model's own confidence ranks
its correctness; AUPRC must be read against that model's accuracy as prevalence,
which is why TinyLlama's $.285$ on ARC-Challenge is near its $.240$ base rate
while Phi-3.5-mini's $.949$ sits above its $.873$.  ``Rescue hit'' is the
fraction of items the model answers correctly while the frozen baseline fails,
and ``loss rate'' the reverse.  $^{\dagger}$Phi-3.5-mini \emph{is} the baseline on
both datasets, so its rescue and loss rates are zero by construction.  The
diagnostic explains the negative results in Table~\ref{tab:paired-template}: the
models with the most to contribute are the weakest ones, and their confidence is
also the worst calibrated (TinyLlama ECE $.273$/$.299$), so a confidence-gated
router cannot identify the items they would rescue.}
\label{tab:confidence-template}
\end{table*}

\section{Pilot Integer Counts}
\label{app:pilotcounts}

The selection-valid intervals in Table~\ref{tab:pilot-main} use the following
counts.  ARC-Challenge has per-model correct counts
$(36,44,105,128)$ and oracle count $143$ out of 150.  HellaSwag has
$(35,44,75,83)$ and oracle count $122$ out of 150.  GSM8K has
$(4,28,43,40)$ and oracle count $58$ out of 100.  The intervals allocate
$\delta/2$ to the oracle binomial interval and $\delta/(2K)$ to each fixed
model interval.

\section{Pairwise Routing Appendix}
\label{app:pairwise}

RouteLLM is not a direct baseline for our primary $K$-way correctness-routing
setting.  Its released formulation selects between a designated strong model and
weak model using pairwise preference supervision, whereas our setting permits
multiple correct models and contains no uniformly dominant endpoint.  We therefore
evaluate RouteLLM only in a supplementary two-model experiment with a predeclared
strong--weak pair.  We do not treat a newly constructed pairwise composition as the
published RouteLLM method.

HybridLLM \citep{ding2024hybridllm} is handled identically.  Platform-controlled
routers such as OpenRouter \citep{openrouter2025auto} define a changing external
pool and therefore fall outside the frozen-pool estimand studied here.



\end{document}